\newcolumntype{L}[1]{>{\raggedright\let\newline\\\arraybackslash\hspace{0pt}}m{#1}}
\newcolumntype{C}[1]{>{\centering\let\newline\\\arraybackslash\hspace{0pt}}m{#1}}
\newcolumntype{R}[1]{>{\raggedleft\let\newline\\\arraybackslash\hspace{0pt}}m{#1}}
\definecolor{Gray}{gray}{0.9}
\definecolor{White}{rgb}{1,1,1}
\definecolor{Gray}{gray}{0.9}
\definecolor{LightCyan}{rgb}{0.88,1,1}
\algrenewcommand\algorithmicrequire{\textbf{Precondition:}}
\algrenewcommand\algorithmicensure{\textbf{Postcondition:}}
\newtheorem{lex}{Theorem}[section]
\newtheorem{defn}[lex]{Definition}
\newcommand{\edit}[1]{{\color{black} #1}}
\begin{document}

%

%
\ecjHeader{x}{x}{xxx-xxx}{201X}{$\epsilon$-Lexicase selection}{W. La Cava}
\title{\edit{A probabilistic and multi-objective analysis of lexicase selection and $\epsilon$-lexicase selection}}  

\author{\name{\bf William La Cava} \hfill \addr{lacava@upenn.edu}\\ 
        \addr{Institute for Bioinformatics, University of Pennsylvania, 
        Philadelphia, PA, 19104, USA}
\AND
       \name{\bf Thomas Helmuth} \hfill \addr{helmutht@hamilton.edu}\\
        \addr{Department of Computer Science, Hamilton College, Clinton, NY, 13323, USA}        
\AND
       \name{\bf Lee Spector} \hfill \addr{lspector@hampshire.edu}\\
        \addr{School of Cognitive Science, Hampshire College, 
        Amherst, MA, 01002, USA}
\AND
	   \name{\bf Jason H. Moore} \hfill \addr{jhmoore@upenn.edu}\\
	   \addr{Institute for Bioinformatics, University of Pennsylvania, 
        Philadelphia, PA, 19104, USA}
}

\maketitle
\maketitle
\begin{abstract}
Lexicase selection is a parent selection method that considers training cases individually, rather than in aggregate, when performing parent selection. Whereas previous work has demonstrated the ability of lexicase selection to solve difficult problems \edit{in program synthesis and symbolic regression}, the central goal of this paper is to develop the theoretical underpinnings that explain its performance. To this end, we derive an analytical formula that gives the expected probabilities of selection under lexicase selection, given a population and its behavior. In addition, we expand upon the relation of lexicase selection to many-objective optimization methods to describe the behavior of lexicase selection, which is to select individuals on the boundaries of Pareto fronts in high-dimensional space. We show analytically why lexicase selection performs more poorly for certain sizes of population and training cases, and show why it has been shown to perform more poorly in continuous error spaces. To address this last concern, we \edit{propose new variants of $\epsilon$-lexicase selection, a method that modifies} the pass condition in lexicase selection to allow near-elite individuals to pass cases, thereby improving selection performance with continuous errors. We show that $\epsilon$-lexicase outperforms several diversity-maintenance strategies on a number of real-world and synthetic regression problems.
\end{abstract}

\section{Introduction}
  
Evolutionary computation (EC) traditionally assigns scalar fitness values to candidate solutions to determine how to guide search. In the case of genetic programming (GP), this fitness value summarizes how closely, on average, the behavior of the candidate programs match the desired behavior. Take for example the task of symbolic regression, in which we attempt to find a model using a set of training examples, i.e. cases. A typical fitness measure is the mean squared error (MSE), which averages the squared differences between the model's outputs, $\hat{y}$, and the target outputs, $y$. The effect of this averaging is to reduce a rich set of information comparing the model's output and the desired output to a single scalar value. As noted by \cite{krawiec_behavioral_2016}, the relationship of $\hat{y}$ to $y$ can only be represented crudely by this fitness value. The fitness score thereby restricts the information conveyed to the search process about candidate programs relative to the description of their behavior available in the raw comparisons of the output to the target, information which could help guide the search~\citep{krawiec_behavioral_2014, krawiec_automatic_2015}. This observation has led to increased interest in the development of methods that can leverage the program outputs directly to drive search more effectively~\citep{vanneschi_survey_2014}.

In addition to reducing information, averaging test performance assumes all tests are equally informative, leading to the potential loss of individuals who perform poorly {\it on average} even if they are the best on a training case that is difficult for most of the population to solve. This is particularly relevant for problems that require different modes of behavior to produce an adequate solution to the problem~\citep{spector_assessment_2012}. The underlying assumption of traditional selection methods is that selection pressure should be applied evenly with respect to training cases. In practice, cases that comprise the problem are unlikely to be uniformly difficult. \edit{In GP, the difficulty of a training case can be thought of as the probability of an arbitrary program solving it. Under the assumption that arbitrary programs do not uniformly solve training instances, it is unlikely that training instances will be uniformly difficult for a population of GP programs}. As a result, the search is likely to benefit if it can take into account the difficulty of specific cases by recognizing individuals that perform well on harder parts of the problem. Underlying this last point is the assumption that GP solves problems by identifying, propagating and recombining partial solutions (i.e. building blocks) to the task at hand~\citep{poli_schema_1998}. As a result, a program that performs well on unique subsets of the problem may \edit{contain} a partial solution to our task. 

Several methods have been proposed to reward individuals with uniquely good training performance, such as implicit fitness sharing (IFS)~\citep{mckay_investigation_2001}, historically assessed hardness~\citep{klein_genetic_2008}, and co-solvability~\citep{schaefer_using_2010}, all of which assign greater weight to fitness cases that are judged to be more difficult in view of the population performance. Perhaps the most effective parent selection method designed to account for case hardness is lexicase selection~\citep{spector_assessment_2012}. In particular, ``global  pool,  uniform  random  sequence,  elitist  lexicase  selection"~\citep{spector_assessment_2012}, which we refer to simply as lexicase selection, has outperformed other similarly-motivated methods in recent studies~\citep{helmuth_solving_2014, helmuth_general_2015-1, liskowski_comparison_2015}. Despite these gains, it fails to produce such benefits when applied to continuous symbolic regression problems, due to its method of selecting individuals based on training case elitism. For this reason we recently proposed~\citep{la_cava_epsilon-lexicase_2016} modulating the case pass conditions in lexicase selection using an automatically defined $\epsilon$ threshold, allowing the benefits of lexicase selection to be achieved in continuous domains. 

To date, lexicase selection and $\epsilon$-lexicase selection have mostly been analyzed via empirical studies, rather than algorithmic analysis. In particular, previous work has not explicitly described the probabilities of selection under lexicase selection compared to other selection methods, nor how lexicase selection relates to the multi-objective literature. Therefore, the foremost purpose of this paper is to describe analytically how lexicase selection and $\epsilon$-lexicase selection operate on a given population compared to other approaches. With this in mind, in \S\ref{s:prob} we derive an equation that describes the expected probability of selection for individuals in a given population based on their behavior on the training cases, for all variants of lexicase selection described here. Then in \S\ref{s:mo}, we analyze lexicase and $\epsilon$-lexicase selection from a multi-objective viewpoint, in which we consider each training case to be an objective. We prove that individuals selected by lexicase selection exist at the boundaries of the Pareto front defined by the program error vectors. We show via an illustrative example population in \S\ref{s:ex} how the probabilities of selection differ under tournament, lexicase, and $\epsilon$-lexicase selection. 

The second purpose of this paper is to empirically assess the use of $\epsilon$-lexicase selection in \edit{the task of symbolic regression}. In \S\ref{s:eplex}, we define two new variants of $\epsilon$-lexicase selection: semi-dynamic and dynamic, which are shown to improve the method compared to the original static implementation. A set of experiments compares variants of $\epsilon$-lexicase selection to several existing selection techniques on a set of real world benchmark problems. The results show the ability of $\epsilon$-lexicase selection to improve the predictive accuracy of models on these problems. We examine in detail the diversity of programs during these runs, as well as the number of cases used in selection events to validate our hypothesis that $\epsilon$-lexicase selection allows for more cases to be used when selecting individuals compared to lexicase selection. Lastly, the time complexity of lexicase selection is experimentally analyzed as a function of population size.    

\section{Methods}
\subsection{Preliminaries}
In symbolic regression, we attempt to find a model $\hat{y}(\mathbf{x}): \mathbb{R}^d \rightarrow \mathbb{R}$ that maps variables to a target output using a set of $T$ training examples $\mathcal{T} = \{t_i = (y_i,\mathbf{x}_i)\}_{i=1}^T$, where $\mathbf{x}$ is a $d$-dimensional vector of variables, i.e. features, and $y$ is the desired output. We refer to elements of $\mathcal{T}$ as ``cases". GP poses the problem as
\begin{equation} \label{eq:gp}
\text{minimize} \hspace{0.5em} {f(n,\mathcal{T})} \hspace{1em} \text{subject to} \hspace{0.5em} {n \in \mathfrak{N}}
\end{equation}
where $\mathfrak{N}$ is the space of possible programs $n$ and $f$ denotes a minimized fitness function. GP attempts to solve the symbolic regression task by optimizing a population of $N$ programs $\mathcal{N} = \{n_i\}_{i=1}^N$, each of which encodes a model of the process and produces an estimate  $\hat{y}_t(n,\mathbf{x}_t): \mathbb{R}^d \rightarrow \mathbb{R}$ when evaluated on case $t$. We refer to $\hat{y}(n)$ as the {\it semantics} of program $n$, omitting $\mathbf{x}$ for brevity. We denote the squared differences between $\hat{y}$ and $y$ (i.e., the errors) as $e_t(n) = (y_t - \hat{y}_t(n))^2$.  We use $\mathbf{e}_t \in \mathbb{R}^N$ to refer to the errors of all programs in the population on training case $t$. The lowest error in $\mathbf{e}_t$ is referred to as $e^*_t$. 

A typical fitness measure ($f$) is the mean squared error, $\text{MSE}(n,\mathcal{T}) = \frac{1}{N} \sum_{t \in \mathcal{T}}{e_t(n)}$, which we use to compare our results in \S\ref{s:results}. For the purposes of our discussion, it is irrelevant whether the MSE or the mean absolute error, i.e. MAE$(n,\mathcal{T}) = \frac{1}{N} \sum_{t \in \mathcal{T}}{|y_t - \hat{y}_t(n)|}$, is used, and so we use MAE to simplify a few examples throughout the paper. With lexicase selection and its variants, $e(n)$ is used directly during selection rather than averaging over cases. Nevertheless, in keeping with the problem statement in Eqn.~\ref{eq:gp}, the final program returned in our experiments is that which minimizes the MSE.

%

\subsection{Lexicase Selection}\label{s:lex}
Lexicase selection is a parent selection technique based on lexicographic ordering of training (i.e. fitness) cases. The lexicase selection algorithm for a single selection event is presented in Algorithm~\ref{alg:lex}. 

\begin{algorithm}
\caption{{\bf Lexicase Selection} applied to individuals $n \in \mathcal{N}$ with errors $e_t(n)$ on training cases $t \in \mathcal{T}$. }\label{alg:lex}
\noindent{\footnotesize
\begin{tabularx}{\textwidth}{lX}
\texttt{Selection}($\mathcal{N,T}$, \texttt{ns}) 	:						&	\\
\hspace{1em} $\mathcal{P} \leftarrow \emptyset$ & set of selected parents \\
\hspace{1em} \textbf{do} \texttt{ns} \textbf{times}: & \texttt{ns} is the number of selection events\\
\hspace{1em}  \hspace{1em} $\mathcal{P} \leftarrow \mathcal{P} \; \cup $ \texttt{GetParent}($\mathcal{N,T}$) & add selected program to $\mathcal{P}$ \\
\\
\texttt{GetParent}($\mathcal{N,T}$) 	:						&	\\
\hspace{1em}	$\mathcal{T}' \leftarrow \mathcal{T}$	&	training cases\\
\hspace{1em}	$S \leftarrow \mathcal{N}$	&	initial selection pool is the population\\
\hspace{1em}	\textbf{while} $|\mathcal{T}'| >0$ \textbf{and} $|\mathcal{S}|>1$:						&	main loop\\
\hspace{1em}\hspace{1em}	$t \leftarrow$ random choice from $\mathcal{T'}$ 	&	\hspace{1em}consider a random case\\
\hspace{1em}\hspace{1em}	\texttt{elite} $\leftarrow$ min $e_t(n)$ \textbf{for} $n \in \mathcal{S}$ 	&	\hspace{1em}determine elite fitness\\
\hspace{1em}\hspace{1em}	\textbf{for} $n \in \mathcal{S}$: 	&	\hspace{1em}reduce selection pool to elites\\
\hspace{1em}\hspace{1em}\hspace{1em}	 \textbf{if} $e_t(n)$ $\neq$ \texttt{elite} \textbf{then}	$\mathcal{S} \leftarrow \mathcal{S} \setminus \{n\}$			&	\\
\hspace{1em}\hspace{1em}	$\mathcal{T'} \leftarrow \mathcal{T'} \setminus \{t\}$ 				&	\hspace{1em}reduce remaining cases\\
\hspace{1em} \textbf{return} random choice from $\mathcal{S}$															&	return parent  
\end{tabularx}
}
\end{algorithm}

Algorithm~\ref{alg:lex} consists of just a few steps: 1) choosing a case, 2) filtering the selection pool based on that case, and 3) repeating until the cases are exhausted or the selection pool is reduced to one individual. If the selection pool is not reduced by the time each case has been considered, an individual is chosen randomly from the remaining pool, $\mathcal{S}$. 

Under lexicase selection, cases in $\mathcal{T}$ can be thought of as filters that reduce the selection pool to the individuals in the pool that are best on that case. Each parent selection event constructs a new path through these filters. We refer to individuals as ``passing" a case if they remain in the selection pool when the case is considered. The filtering strength of a case is affected by two main factors: its difficulty as defined by the number of individuals that the case filters from the selection pool, and its order in the selection event, which varies from selection to selection. These two factors are interwoven in lexicase selection because a case performs its filtering on a subset of the population created by a randomized sequence of cases that come before it. In other words, the difficulty of a case depends not only on the problem definition, but on the ordering of the case in the selection event, which is randomized for each selection.

\edit{The randomized case order and filtering} mechanisms allow selective pressure to continually shift to individuals that are elite on cases that are \edit{rarely} solved in $\mathcal{N}$. Because cases appear in various orderings during selection, there is selective pressure for individuals to solve unique {\it subsets} of cases. Lexicase selection thereby accounts for the difficulty of individual cases as well as the difficulty of solving arbitrarily-sized subsets of cases. This selection pressure leads to the preservation of high behavioral diversity during evolution~\citep{helmuth_effects_2016, la_cava_epsilon-lexicase_2016}. 

The worst-case complexity of selecting $N$ parents per generation with $|\mathcal{T}| = T$ test cases is $O(TN^2)$. This running time stems from the fact that to select a single individual, lexicase selection may have to consider the error value of every individual on every test case.
In contrast, tournament selection only needs to consider the precomputed fitnesses of a constant tournament size number of individuals; thus selecting a single parent can be done in constant time. Since errors need to be calculated and summed for every test case on every individual, tournament selection requires $O(TN)$ time to select $N$ parents.
Normally, due to differential performance across the population and due to lexicase selection's tendency to promote diversity, a lexicase selection event will use many fewer test cases than $T$; the selection pool typically winnows below $N$ as well, meaning the actual running time tends to be better than the worst-case complexity~\citep{helmuth_solving_2014, la_cava_epsilon-lexicase_2016}. 

We use an example population originally presented in~\citep{spector_assessment_2012} to illustrate some aspects of standard lexicase selection in the following sections. The population, shown in Table~\ref{tbl:ex}, consists of five individuals and four training cases with discrete errors. A graphical example of the filtering mechanism of selection is presented for this example in Figure~\ref{fig:lex_graph}. Each lexicase selection event can be visualized as a randomized depth-first pass through the training cases. Figure~\ref{fig:lex_graph} shows three example selection events resulting in the selection of different individuals. The population is winnowed at each case to the elites until single individuals, shown with diamond-shaped nodes, are selected. 

\begin{table}
\centering
\caption{Example population from original lexicase paper~\citep{spector_assessment_2012}. $P_{lex}$ and $P_t$ are the probabilities of selection under lexicase selection (Eqn.~\ref{eq:prob}) and tournament selection with tournament size 2 (Eqn.~\ref{eq:prob_t}), respectively.}\label{tbl:ex}
\begin{tabular}{l|cccc|c|r|rr}\toprule
Program & \multicolumn{4}{c}{Case Error} & Elite Cases & MAE & $P_{lex}$ & $P_{t}$\\
& $e_1$ & $e_2$ & $e_3$ & $e_4$ & \\ \midrule
$n_1$ & 2 & 2 & 4 & 2 & $\{t_2,t_4\}$ &	2.5		&	0.25 	& 	0.28	\\
$n_2$ & 1 & 2 & 4 & 3 & $\{t_2\}$		&	2.5		&	0.00	&	0.28	\\
$n_3$ & 2 & 2 & 3 & 4 & $\{t_2,t_3\}$ &	2.75	& 	0.33	&	0.12	\\
$n_4$ & 0 & 2 & 5 & 5 & $\{t_1,t_2\}$ &	3.0		& 	0.21	&	0.04	\\
$n_5$ & 0 & 3 & 5 & 2 & $\{t_1,t_4\}$ &	2.5		&	0.21	&	0.28	\\ \bottomrule
\end{tabular}
\end{table}

\begin{figure}[tb]
\centering
  \includegraphics[height = 0.3\textheight]{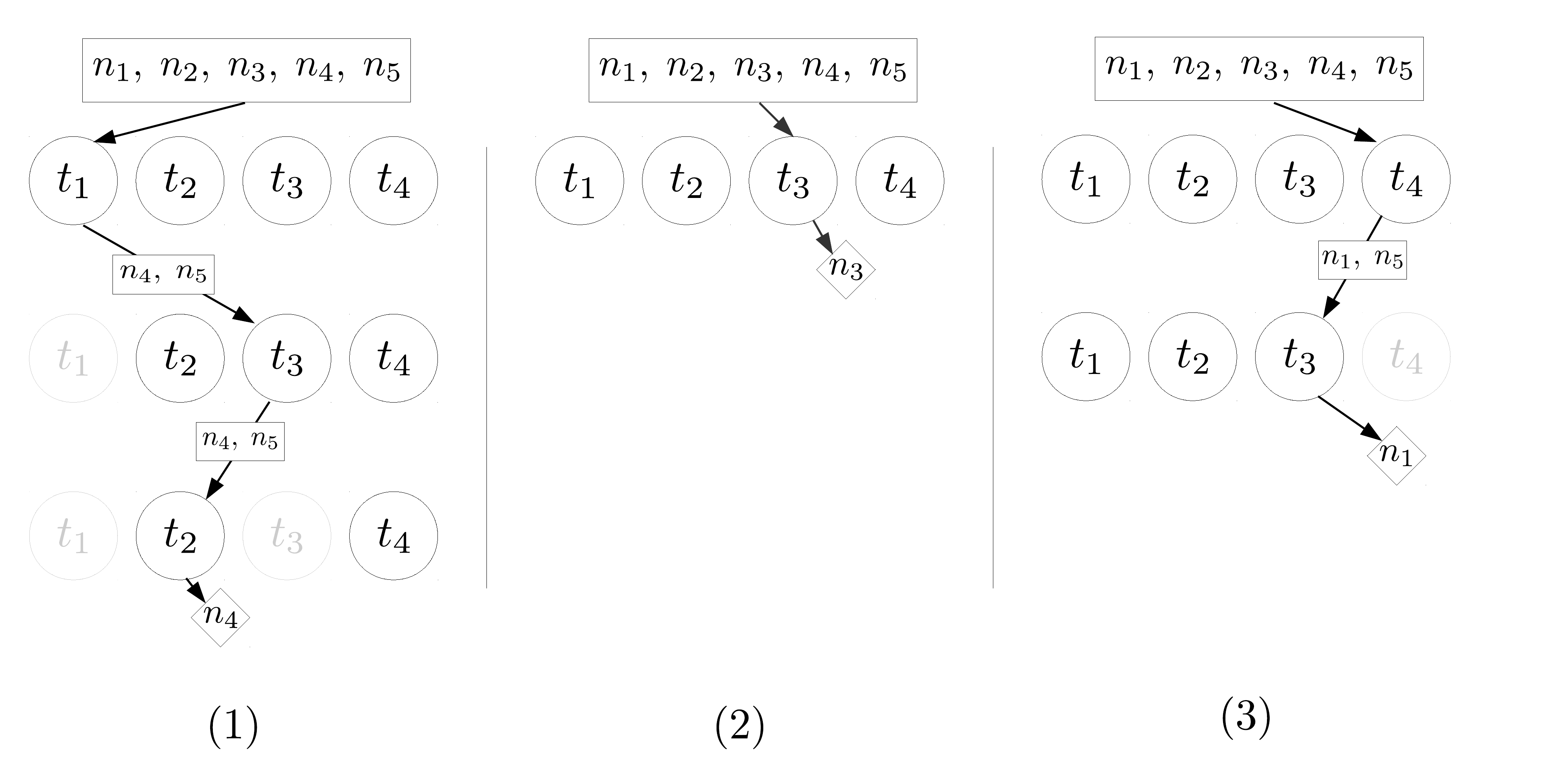}
  \caption{A graphical representation of three parent selections using lexicase selection on the population in Table~\ref{tbl:ex}. The arrows indicate different selection paths through the training cases in circles. The boxes indicate the selection pool after the case performs its filtering. The diamonds show the individual selected by each selection event. Training cases in gray indicate that they have already been traversed by the current parent selection process.}\label{fig:lex_graph}
\end{figure}

\subsection{{\large $\epsilon$}-Lexicase Selection}\label{s:eplex}

Lexicase selection has been shown to be effective in discrete error spaces, both for multi-modal problems~\citep{spector_assessment_2012} and for problems for which every case must be solved exactly to be considered a solution~\citep{helmuth_solving_2014, helmuth_general_2015-1}. In continuous error spaces, however, the requirement for individuals to be {\it exactly} equal to the elite error in the selection pool to pass a case turns out to be overly stringent~\citep{la_cava_epsilon-lexicase_2016}. In continuous error spaces and especially for symbolic regression with noisy datasets, it is unlikely for two individuals to have exactly the same error on any training case unless they are (or reduce to) equivalent models. As a result, lexicase selection is prone to conducting selection based on single cases, for which the selected individual satisfies $e_t \equiv e^*_t$, the minimum error on $t$ among $\mathcal{N}$. Selecting on single cases limits the ability of lexicase to leverage case information on subsets of test cases effectively, and can lead to poorer performance than traditional selection methods~\citep{la_cava_epsilon-lexicase_2016}. 

These observations led to the development of $\epsilon$-lexicase selection~\citep{la_cava_epsilon-lexicase_2016}, \edit{which modulates case filtering by calculating an $\epsilon$ threshold criteria for each training case.} Hand-tuned and automatic variants of $\epsilon$ were proposed and tested. The best performance was achieved by a 'parameter-less' version that defines $\epsilon$ according to the dispersion of errors in the population on each training case using the median absolute deviation statistic:  
\begin{equation}\label{eq:ep}
\epsilon_t = \lambda(\mathbf{e}_t) = \text{median}(|\mathbf{e}_t - \text{median}(\mathbf{e}_t)|) 
\end{equation}
\edit{Defining $\epsilon$ according to Eqn.~\ref{eq:ep} allows the threshold to conform to the performance of the population on each training case. As the performance on each training case improves across the population, $\epsilon$ shrinks, thereby modulating the selectivity of a case based on how difficult it is.} We choose the median absolute deviation in lieu of the standard deviation statistic for calculating $\epsilon$ because it is more robust to outliers~\citep{pham-gia_mean_2001}. 

We study three implementations of $\epsilon$-lexicase selection in this paper: static, which is the version originally proposed~\citep{la_cava_epsilon-lexicase_2016}; semi-dynamic, in which the elite error is defined relative to the current selection pool; and dynamic, in which both the elite error and $\epsilon$ are defined relative to the current selection pool. 

Static $\epsilon$-lexicase selection can be viewed as a preprocessing step added to lexicase selection in which the program errors are converted to pass/fail based on an $\epsilon$ threshold. This threshold is defined relative to $e^*_t$, the lowest error on test case $t$ over the entire population. We call this static $\epsilon$-lexicase selection because the elite error $e^*_t$ and $\epsilon$ are only calculated once per generation, instead of relative to the current selection pool, as described in Algorithm~\ref{alg:ep-lex-s}.

\begin{algorithm}
    \caption{{\bf Static $\epsilon$-Lexicase Selection} applied to individuals $n \in \mathcal{N}$ with errors $e_t(n)$ and minimum error $e_t^*$ on training cases $t \in \mathcal{T}$. $\lambda$ is the median absolute deviation function.}\label{alg:ep-lex-s}
\noindent{\footnotesize
\begin{tabularx}{\textwidth}{lX}
\texttt{Selection}($\mathcal{N,T}$, \texttt{ns}) 	:						&	\\
\hspace{1em} $\mathcal{P} \leftarrow \emptyset$ & set of selected parents \\
    \hspace{1em}	$\epsilon \leftarrow \lambda$($\mathbf{e}_t$) \textbf{for} $t \in \mathcal{T}$	&	get $\epsilon$ for each case across population\\
\hspace{1em}	\textbf{for}	 $t \in \mathcal{T}$ \textbf{and} $n \in \mathcal{N}$:	&	define fitness $f$ using within-$\epsilon$ pass condition\\
\hspace{1em}\hspace{1em}	\textbf{if} $e_t(n) \leq e^*_t + \epsilon_t$ \textbf{then} $f_t(n) \leftarrow 0$ \\
\hspace{1em}\hspace{1em} \textbf{else} $f_t(n) \leftarrow 1$ \\ 
\hspace{1em} \textbf{do} \texttt{ns} \textbf{times}: & \texttt{ns} is the number of selection events\\
\hspace{1em}  \hspace{1em} $\mathcal{P} \leftarrow \mathcal{P} \; \cup $ \texttt{GetParent}($\mathcal{N,T}$,$f$) & add selected program to $\mathcal{P}$ \\
\\
\texttt{GetParent}($\mathcal{N,T}$, $f$) 	:						&	\\
\hspace{1em}	$\mathcal{T}' \leftarrow \mathcal{T}$	&	training cases\\
\hspace{1em}	$S \leftarrow \mathcal{N}$	&	initial selection pool is the population\\

\hspace{1em}	\textbf{while} $|\mathcal{T}'| >0$ \textbf{and} $|\mathcal{S}|>1$:						&	main loop\\
\hspace{1em}\hspace{1em}	$t \leftarrow$ random choice from $\mathcal{T'}$  	&	\hspace{1em}consider a random case\\
\hspace{1em}\hspace{1em}	\texttt{elite} $\leftarrow$ min $f_t(n)$ \textbf{for} $n \in \mathcal{S}$ 	&	\hspace{1em}determine elite fitness\\
\hspace{1em}\hspace{1em}	\textbf{for} $n \in \mathcal{S}$: 	&	\hspace{1em}reduce selection pool\\
\hspace{1em}\hspace{1em}\hspace{1em}	 \textbf{if} $f_t(n) \neq$ \texttt{elite} \textbf{then}	$\mathcal{S} \leftarrow \mathcal{S} \setminus \{n\}$			&	\\
\hspace{1em}\hspace{1em}	$\mathcal{T'} \leftarrow \mathcal{T'} \setminus \{t\}$ 				&	\hspace{1em}reduce remaining cases\\
\hspace{1em} \textbf{return} random choice from $\mathcal{S}$															&	return parent  
\end{tabularx}
}
\end{algorithm}

Semi-dynamic $\epsilon$-lexicase selection differs from static $\epsilon$-lexicase selection in that the pass condition is defined relative to the best error {\it among the pool} rather than among the entire population $\mathcal{N}$. In this way it behaves more similarly to standard lexicase selection (Algorithm~\ref{alg:lex}), except that individuals are filtered out only if they have error more than $e^*_t + \epsilon_t$. It is defined in Algorithm~\ref{alg:ep-lex-sd}.

\begin{algorithm}
\caption{{\bf Semi-dynamic $\epsilon$-Lexicase Selection} applied to individuals $n \in \mathcal{N}$ with errors $e_t(n)$ on training cases $t \in \mathcal{T}$. $\lambda$ is the median absolute deviation function.}\label{alg:ep-lex-sd}
\noindent{\footnotesize
\begin{tabularx}{\textwidth}{lX}
\texttt{Selection}($\mathcal{N,T}$, \texttt{ns}) 	:						&	\\
\hspace{1em} $\mathcal{P} \leftarrow \emptyset$ & set of selected parents \\
\hspace{1em}	$\epsilon \leftarrow \lambda$($\mathbf{e}_t$) \textbf{for} $t \in \mathcal{T}$	&	get $\epsilon$ for each case across population\\
\hspace{1em} \textbf{do} \texttt{ns} \textbf{times}: & \texttt{ns} is the number of selection events\\
\hspace{1em}  \hspace{1em} $\mathcal{P} \leftarrow \mathcal{P} \; \cup $ \texttt{GetParent}($\mathcal{N,T},\epsilon$) & add selected program to $\mathcal{P}$ \\
\\
\texttt{GetParent}($\mathcal{N,T},\epsilon$) 	:						&	\\
\hspace{1em}	$\mathcal{T}' \leftarrow \mathcal{T}$	&	training cases\\
\hspace{1em}	$S \leftarrow \mathcal{N}$	&	initial selection pool is the population\\
\hspace{1em}	\textbf{while} $|\mathcal{T}'| >0$ \textbf{and} $|\mathcal{S}|>1$:						&	main loop\\
\hspace{1em}\hspace{1em}	$t$ $\leftarrow$ random choice from $\mathcal{T'}$ &	\hspace{1em}consider a random case\\
\hspace{1em}\hspace{1em}	\texttt{elite} $\leftarrow$ min $e_t(n)$ \textbf{for} $n \in \mathcal{S}$ 	&	\hspace{1em}determine elite fitness\\
\hspace{1em}\hspace{1em}	\textbf{for} $n \in \mathcal{S}$: 	&	\hspace{1em}reduce selection pool\\
\hspace{1em}\hspace{1em}\hspace{1em}	 \textbf{if} $e_t(n)$ $>$ \texttt{elite}$+\epsilon_{t}$ \textbf{then}	$\mathcal{S} \leftarrow \mathcal{S} \setminus \{n\}$	 \\
\hspace{1em}\hspace{1em}	$\mathcal{T'} \leftarrow \mathcal{T'} \setminus \{t\}$				&	\hspace{1em}reduce remaining cases\\
\hspace{1em} \textbf{return} random choice from $\mathcal{S}$															&	return parent  
\end{tabularx}
}
\end{algorithm}

The final variant of $\epsilon$-lexicase selection is dynamic $\epsilon$-lexicase selection, in which both the error and $\epsilon$ are defined among the current selection pool. In this case, $\epsilon$ is defined as 
\begin{equation}\label{eq:epd}
\epsilon_t(\mathcal{S}) = \text{median}(|\mathbf{e}_t(\mathcal{S}) - \text{median}(\mathbf{e}_t(\mathcal{S}))|) = \lambda(\mathbf{e}_t(\mathcal{S}))
\end{equation}
where $\mathbf{e}_t(\mathcal{S})$ is the vector of errors for case $t$ among the current selection pool $\mathcal{S}$. The dynamic $\epsilon$-lexicase selection algorithm is presented in Algorithm~\ref{alg:ep-lex-d}.

\begin{algorithm}
\caption{{\bf Dynamic $\epsilon$-Lexicase Selection} applied to individuals $n \in \mathcal{N}$ with errors $e_t(n)$ on training cases $t \in \mathcal{T}$. $\lambda$ is the median absolute deviation function.}\label{alg:ep-lex-d}
\noindent{\footnotesize
\begin{tabularx}{\textwidth}{lX}
\texttt{Selection}($\mathcal{N,T}$, \texttt{ns}) 	:						&	\\
\hspace{1em} $\mathcal{P} \leftarrow \emptyset$ & set of selected parents \\
\hspace{1em} \textbf{do} \texttt{ns} \textbf{times}: & \texttt{ns} is the number of selection events\\
\hspace{1em}  \hspace{1em} $\mathcal{P} \leftarrow \mathcal{P} \; \cup $ \texttt{GetParent}($\mathcal{N,T}$) & add selected program to $\mathcal{P}$ \\
\\
\texttt{GetParent}($\mathcal{N,T}$) 	:						&	\\
\hspace{1em}	$T' \leftarrow \mathcal{T}$	&	training cases\\
\hspace{1em}	$S \leftarrow \mathcal{N}$	&	initial selection pool is the population\\
\hspace{1em}	\textbf{while} $|T'| >0$ \textbf{and} $|\mathcal{S}|>1$:						&	main loop\\
\hspace{1em}\hspace{1em}	$t$ $\leftarrow$ random choice from $\mathcal{T'}$  	&	\hspace{1em}consider a random case\\
\hspace{1em}\hspace{1em}	\texttt{elite} $\leftarrow$ min $e_t(n)$ \textbf{for} $n \in \mathcal{S}$ 	&	\hspace{1em}determine elite fitness\\
\hspace{1em}\hspace{1em}	$\epsilon_t \leftarrow \lambda(\mathbf{e}_t(\mathcal{S}))$	&	\hspace{1em}determine $\epsilon$ for case $t$\\
\hspace{1em}\hspace{1em}	\textbf{for} $n \in \mathcal{S}$: 	&	\hspace{1em}reduce selection pool\\
\hspace{1em}\hspace{1em}\hspace{1em}	 \textbf{if} $e_t(n)$ $>$ \texttt{elite}$+\epsilon_{t}$ \textbf{then}	$\mathcal{S} \leftarrow \mathcal{S} \setminus \{n\}$	 \\
\hspace{1em}\hspace{1em}	$\mathcal{T'} \leftarrow \mathcal{T'} \setminus \{t\}$				&	\hspace{1em}reduce remaining cases\\
\hspace{1em} \textbf{return} random choice from $\mathcal{S}$															&	return parent  
\end{tabularx}
}
\end{algorithm}

Since calculating $\epsilon$ according to Eqn.~\ref{eq:ep} is $O(N)$ for a single test case, the three $\epsilon$-lexicase selection algorithms share a worst-case complexity with lexicase selection of $O(TN^2)$ to select $N$ parents. As discussed in \S\ref{s:lex}, these worst-case time complexities are rare, and empirical results have confirmed $\epsilon$-lexicase to run within the same time frame as tournament selection~\citep{la_cava_epsilon-lexicase_2016}. We assess the affect of population size on wall-clock times in \S\ref{s:exp}. 

\subsection{Related Work}\label{s:rw}

\edit{Lexicase selection belongs to a class of search drivers that incorporate a program's full semantics directly into the search process, and as such shares a general motivation with semantic GP methods. Geometric Semantic GP~\citep{moraglio_geometric_2012} uses a program's semantics in the variation step by defining mutation and crossover operators that make steps in semantic space. Intermediate program semantics can also be leveraged, as shown by Behavioral GP~\citep{krawiec_behavioral_2014}, which uses a program's execution trace to build an archive of program building blocks and learn intermediate concepts. Unlike lexicase selection, Behavioral GP generally exploits intermediate program semantics, rather than intermediate fitness cases, to guide search. These related semantic GP methods tend to use established selection methods while leveraging program semantics at other steps in the search process. }

Instead of incorporating the full semantics, another option is to alter the fitness metric by weighting training cases based on population performance~\citep{mckay_investigation_2001}. In non-binary Implicit Fitness Sharing (IFS)~\citep{krawiec_implicit_2013}, for example, the fitness proportion of a case is scaled by the performance of other individuals on that case. Similarly, historically assessed hardness scales error on each training case by the success rate of the population~\citep{klein_genetic_2008}. \edit{These methods are able to capture a univariate notion of fitness case difficulty, but unlike lexicase selection, interactions between cases are not considered in estimating difficulty.}

Discovery of objectives by clustering (DOC)~\citep{krawiec_automatic_2015} clusters training cases by population performance, and thereby reduces training cases into a set of objectives used in multi-objective optimization. Both IFS and DOC were outperformed by lexicase selection on program synthesis and boolean problems in previous studies~\citep{helmuth_general_2015-1,liskowski_comparison_2015}. \edit{More recently, \cite{liskowski_discovery_2017} proposed hybrid techniques that combine DOC and related objective derivation methods with $\epsilon$-lexicase selection, and found that this combination performed well on symbolic regression problems.} 

Other methods attempt to sample a subset of $\mathcal{T}$ to reduce computation time or improve performance, such as dynamic subset selection~\citep{gathercole_dynamic_1994}, interleaved sampling~\citep{goncalves_balancing_2013}, and co-evolved fitness predictors~\citep{schmidt_coevolution_2008}. Unlike these methods, lexicase selection begins each selection with the full set of training cases, and allows selection to adapt to program performance on them.
\edit{
Another approach to adjusting selection pressure based on population performance is to automatically tune tournament selection, a method which was investigated by \cite{xie_parent_2013}. In that work, tournament selection pressure was tuned to correspond to the distribution of fitness ranks in the population. 
}

Although to an extent the ideas of multiobjective optimization apply to multiple training cases, they are qualitatively different and commonly operate at different scales. Symbolic regression often involves one or two objectives (e.g. accuracy and model conciseness) and hundreds or thousands of training cases. One example of using training cases explicitly as objectives occurs in~\cite{langdon_evolving_1995} in which small numbers of training cases (in this case 6) are used as multiple objectives in a Pareto selection scheme. Other multi-objective approaches such as NSGA-II~\citep{schoenauer_fast_2000}, SPEA2~\citep{zitzler_spea2:_2001} and ParetoGP~\citep{smits_pareto-front_2005} are commonly used with a small set of objectives in symbolic regression. \edit{The ``curse of dimensionality" makes the use of objectives at the scale of typical training case sizes problematic, since most individuals become nondominated. Scaling issues in many-objective optimization are reviewed by~\cite{ishibuchi_evolutionary_2008} and surveyed in~\cite{li_many-objective_2015}. Several methods have been proposed to deal with large numbers of objectives, including hypervolume-based methods such as HypE, reference point methods like NSGA-III, and problem decomposition methods like $\epsilon$-MOEA and MOEA/D~\citep{chand_evolutionary_2015}. \cite{li_empirical_2017} benchmarked several reference point methods on problems of up to 100 objectives, further shrinking the scalability gap.} The connection between lexicase selection and multi-objective methods is explored in depth in \S\ref{s:mo}.

The conversion of a model's real-valued fitness into discrete values based on an $\epsilon$ threshold has been explored in other research; for example, Novelty Search GP~\citep{martinez_searching_2013} uses a reduced error vector to define behavioral representation of individuals in the population. \cite{la_cava_epsilon-lexicase_2016} used it for the first time as a solution to applying lexicase selection effectively to regression, with static $\epsilon$-lexicase selection (Algorithm~\ref{alg:ep-lex-s}).

Recent work has empirically studied and extended lexicase selection.~\cite{helmuth_impact_2016} found that extreme selection events in lexicase selection were not central to its performance improvements and that lexicase selection could re-diversify less-diverse populations unlike tournament selection~\citep{helmuth_effects_2016}. A survival-based version of $\epsilon$-lexicase selection has also been proposed~\citep{la_cava_general_2017,la_cava_ensemble_2017} for maintaining uncorrelated populations in an ensemble learning context. 

\edit{
\section{Theoretical Analysis}
In the first half of this section ((\S\ref{s:prob}), we examine the probabilities of selection under lexicase selection. Our aims are to answer the following questions: First, what is the probability of an individual being selected by lexicase selection, given its performance in a population on a set of training cases? Second, how is this probability influenced by the sizes of the population and training set?  In the second half (\S\ref{s:mo}), we establish relations between lexicase selection and multi-objective optimization. Our aim is to define precisely how parents selected by lexicase variants are positioned in semantic space.}

\subsection{Expected Probabilities of Selection}\label{s:prob}
The probability of $n$ being selected by lexicase selection is the probability that a case $n$ passes is selected and: 1) $n$ is the only individual that passes the case; or 2) no more cases remain and $n$ is selected among the set of individuals that pass the selected case; or 3) $n$ is selected via the selection of another case that $n$ passes (repeating the process). 

Formally, let $P_{lex}(n | \mathcal{N}, \mathcal{T})$ be the probability of $n \in \mathcal{N}$ being selected by lexicase selection. Let $\mathcal{K}_n(\mathcal{T},\mathcal{N}) = \{k_i\}_{i=1}^K \subseteq \mathcal{T}$ be the training cases from $\mathcal{T}$ for which individual $n$ is elite among $\mathcal{N}$. We will use $\mathcal{K}_n$ for brevity. Then the probability of selection under lexicase can be represented as a piece-wise recursive function:


\begin{equation}\label{eq:prob}
P_{lex}(n | \mathcal{N}, \mathcal{T}) =
\begin{cases}
1 & \text{if } |\mathcal{N}| = 1;\\
1/|\mathcal{N}| & \text{if } |\mathcal{T}| = 0;\\
\frac{1}{|\mathcal{T}|}\sum_{k_s \in \mathcal{K}_n}{P_{lex} \left( n | \mathcal{N}(m|k_s \in \mathcal{K}_m), \mathcal{T} \setminus \{k_s\} \right)} & \text{otherwise}
\end{cases}
\end{equation}

The first two elements of $P_{lex}$ follow from the lexicase algorithm: if there is one individual in $\mathcal{N}$, then it is selected; otherwise if there no more cases in in $\mathcal{T}$, then $n$ has a probability of selection split among the individuals in $\mathcal{N}$, i.e.,  $1/|\mathcal{N}|$. If neither of these conditions are met, the remaining probability of selection is $1/|\mathcal{T}|$ times the summation of $P_{lex}$ over $n$'s elite cases. Each case in $\mathcal{K}_n$ has a probability of $1/|\mathcal{T}|$ of being selected. For each potential selection $k_s$, the probability of $n$ being selected as a result of this case being chosen is dependent on the number of individuals that are also elite on this case, represented by $\mathcal{N}(m|k_s \in \mathcal{K}_m)$, and the cases that are left to be traversed, represented by $\mathcal{T} \setminus \{k_s\}$. 

Eqn.~\ref{eq:prob} also describes the probability of selection under $\epsilon$-lexicase selection, with the condition that {\it elitism} on a case is defined as being within $\epsilon$ of the best error on that case, where the best error is defined among the whole population (static) or among the current selection pool (semi-dynamic and dynamic) and $\epsilon$ is defined according to Eqn.~\ref{eq:ep} or Eqn.~\ref{eq:epd}. 

According to Eqn.~\ref{eq:prob}, when fitness values across the population are unique, selection probability is $P_{lex}(n) = \frac{1}{|\mathcal{T}|} \sum_{k_s \in \mathcal{K}_n} 1 = \frac{|\mathcal{K}_n|}{|\mathcal{T}|}$, since filtering the population according to any case for which $n$ is elite will result in $n$ being selected. Conversely, if the population semantics are completely homogeneous such that every individual is elite on every case, the selection will be uniformly random, giving the selection probability $P_{lex}(n) = \frac{1}{N}$. This property of uniformity in selection for identical performance holds true each time a case is considered; a case only impacts selection if there is differential performance on it in the selection pool. The same conclusion can be gleaned from Algorithm~\ref{alg:lex}: any case that every individual passes provides no selective pressure because the selection pool does not change when that case is considered.

Although it is tempting to pair Eqn.~\ref{eq:prob} with roulette wheel selection as an alternative to lexicase selection, an analysis of its complexity discourages such use. Eqn.~\ref{eq:prob} has a worst-case complexity of $O(T^N)$, which is exhibited when all individuals are elite on $\mathcal{T}$. 

\subsubsection{Effect of Population and Training Set Size}

Previous studies have suggested that the performance of lexicase selection is sensitive to the number of training cases~\citep{liskowski_comparison_2015}. In this section we develop the relation of population size and number of training cases to the performance of lexicase selection as a search driver. In part, this behavior is inherent to the design of the algorithm. However, this behavior is also linked to the fidelity with which lexicase selection samples the expected probabilities of selection for each individual in the population. 

The effectiveness of lexicase selection is expected to suffer when there are few training cases. When $T$ is small, there are very few ways in which individuals can be selected. In an extreme case, if $T = 2$, an individual must be elite on one of these two cases to be selected. In fact, in this case individuals with at most 2 different error vectors will be selected. For continuous errors in which few individuals are elite, this means that very few individuals are likely to produce all of the children for the subsequent generation, leading to hyperselection~\citep{helmuth_impact_2016} and diversity loss. On the other hand, if many individuals solve both cases, selection becomes increasingly random. 

The population size is tied to selection behavior because it determines the number of selection events (\texttt{ns} in Algorithms 2.1-3.3). In our implementation, \texttt{ns} = $N$, whereas in other implementations, $N \leq \texttt{ns} \leq 2N$. This implies that the value of $N$ determines the fidelity with which $P_{lex}$ is approximated via the sampling of the population by parent selection. Smaller populations will therefore produce poorer approximations of $P_{lex}$. Of course, this problem is not unique to lexicase selection; tournament selection also samples from an expected distribution and is affected by the number of tournaments~\citep{xie_another_2007}.  

Both $N$ and $T$ affect how well the expected probabilities of selection derived from Eqn.~\ref{eq:prob} are approximated by lexicase selection. Consider the probability of a case being in at least one selection event in a generation, which is one minus the probability of it not appearing, yielding
\[Pr= 1 - \prod_{i=1}^N{\frac{(T-1)!}{T!(T-1-d_i)!}} \]
Here, the case depth $d_i$ is the number of cases used to select a parent for selection event $i$. Because the case depth varies from selection to selection based on population semantics, this case probability is difficult to analyze. However, it can be simplified to consider the scenario in which a case appears {\it first} in selection. In fact, Eqn.~\ref{eq:prob} implies that a case in $\mathcal{K}_n$ influences the probability of selection of $n$ most heavily when it occurs first in a selection event. There are two reasons: first, the case has the potential to filter $N-1$ individuals, which is the strongest selection pressure it can apply. Second, a case's effect size is highest when selected first because it is not conditioned on the probability of selection of any other cases. Each subsequent case selection has a reduced effect on $P_{lex}$ of $\prod_{i=1}^d{\frac{1}{T-i}}$, where $d$ is the case depth. These observations also highlight the importance of the relative sizes of $N$ and $T$ because they affect the probability that a case will be observed at the top of a selection event in a given generation, which affects how closely Eqn.~\ref{eq:prob} is approximated. Let $P_{\text{first}}$ be the probability that a case will come first in a selection event {\it at least once} in a generation. Then
\begin{equation}\label{eq:prob_case}
P_{\text{first}} = 1 - \left(\frac{(T-1)}{T}\right)^N
\end{equation}
assuming $N$ selection events. This function is plotted for various values of $N$ and $T$ in Figure~\ref{fig:prob_first}, and illustrates that the probability of a case appearing first in selection drops as $T$ grows and as $N$ shrinks. For example, $P_{\text{first}} \approx 0.5$ when $N=1000$ and $T=1433$. We therefore expect the observed probabilities of selection for $n \in \mathcal{N}$ to differ from $P_{lex}(n)$ when $T >> N$, due to insufficient sampling of the cases. In the case of $N >> T$, we expect most cases to appear first and therefore the probability predictions made by Eqn.~\ref{eq:prob} to be more accurate to the actual selections. 

\begin{figure}
\centering
  \includegraphics[width = 0.6\textwidth]{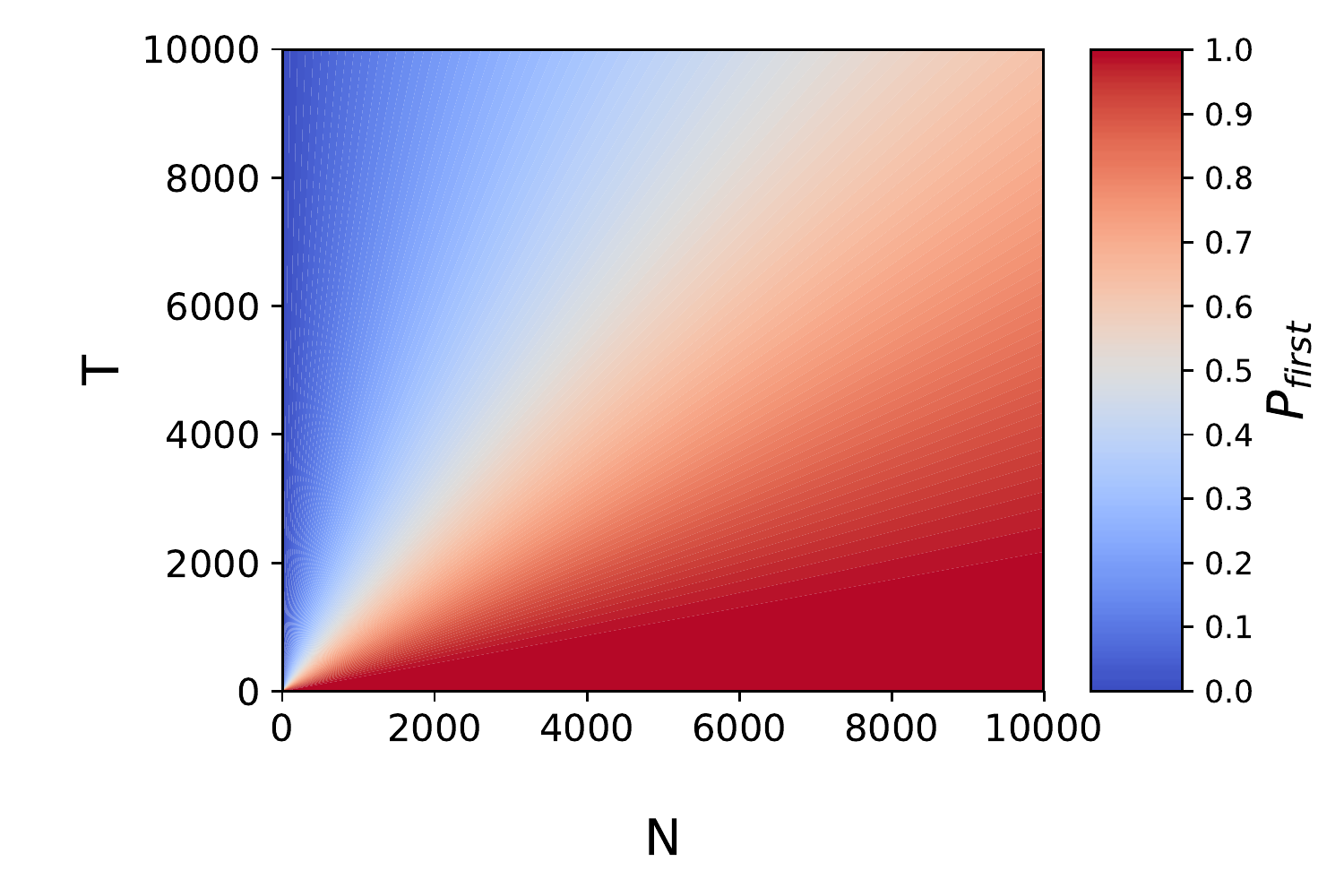}
  \caption{The probability of a case occuring first in a selection event given $T$ training cases and $N$ selections.}\label{fig:prob_first}
\end{figure}

\subsubsection{Probabilities under tournament selection} We compare the probability of selection under lexicase selection to that using tournament selection with an identical population and fitness structure. To do so we must first formulate the probability of selection for an individual undergoing tournament selection with size $r$ tournaments. The fitness ranks of $\mathcal{N}$ can be calculated, for example using MAE as fitness, with lower rank indicating better fitness. Let $S_i$ be the individuals in $\mathcal{N}$ with a fitness rank of $i$, and let $Q$ be the number of unique fitness ranks. \cite{xie_another_2007} showed that the probability of selecting an individual with rank $j$ in a single tournament is
\begin{equation}\label{eq:prob_t}
P_t = \frac{1}{|S_j|}\left( \left(\frac{\sum_{i=j}^Q{|S_i|}}{N}\right)^r - \left(\frac{\sum_{i=j+1}^Q{|S_i|}}{N}\right)^r \right)
\end{equation}

In Table~\ref{tbl:ex}, the selection probabilities for the example population are shown according to lexicase selection (Eqn.~\ref{eq:prob}) and tournament selection (Eqn.~\ref{eq:prob_t}). Note that the tournament probabilities are proportional to the aggregate fitness, whereas lexicase probabilities reflect more subtle but intuitive performance differences as discussed by~\cite{spector_assessment_2012}. In \S\ref{s:ex} we present a more detailed population example with continuous errors and compare probabilities of selection using lexicase, $\epsilon$-lexicase and tournament selection. 

\subsection{Multi-objective Interpretation of Lexicase Selection}\label{s:mo}

Objectives and training cases are fundamentally different entities: objectives define the goals of the task being learned, whereas cases are the units by which progress towards those objectives is measured. By this criteria, lexicase selection and multi-objective optimization have historically been differentiated~\citep{helmuth_general_2015}, although there is clearly a ``multi-objective'' interpretation of the behavior of lexicase selection with respect to the training cases. Let us assume for the remainder of this section that individual fitness cases are objectives to solve. The symbolic regression task then becomes a high-dimensional, many-objective optimization problem. At this scale, the most popular multi-objective methods (e.g. NSGA-II and SPEA-2) tend to perform poorly, a behavior that has been explained in literature~\citep{wagner_pareto-_2007, farina_optimal_2002}. \cite{farina_optimal_2002} point out two short-comings of these multi-objective methods when many objectives are considered: \begin{quote}
the Pareto definition of optimality in a multi-criteria decision making problem can be unsatisfactory due to essentially two reasons: the number of improved or equal objective values is not taken into account, the (normalized) size of improvements is not taken into account.
\end{quote}

As we describe in \S\ref{s:prob}, lexicase selection takes into account the number of improved or equal objectives (i.e. cases) by increasing the probability of selection for individuals who solve more cases (consider the summation in the third part of Eqn.~\ref{eq:prob}). The increase per case is proportional to the difficulty of that case, as defined by the selection pool's performance. Regarding Farina and Amato's second point, the {\it size} of the improvements are taken into account by $\epsilon$-lexicase selection. They are taken into account by the automated thresholding performed by $\epsilon$ which rewards individuals for being within an acceptable range of the best performance on the case. We develop the relationship between lexicase selection and Pareto optimization in the remainder of this section. 

It has been noted that lexicase selection guarantees the return of individuals that are on the Pareto front with respect to the fitness cases~\citep{la_cava_epsilon-lexicase_2016}. However, this is a necessary but not sufficient condition for selection. As we show below, lexicase selection only selects those individuals in the ``corners" or boundaries of the Pareto front, meaning they are on the front {\it and} elite, globally, with respect to at least one fitness case. Below, we define these Pareto relations with respect to the training cases. 

\begin{defn}\label{def:dom}
$n_1$ {\it dominates} $n_2$, i.e., ${n_1} \prec {n_2}$, if $e_j(n_1) \leq e_j(n_2) \;
\forall j  \in \{1,\dots,T\}$ and $\exists j \in \{1,\dots,T\}$ for which $e_j(n_1) < e_j(n_2)$. \bigskip
\end{defn}

\begin{defn}\label{def:pset}
The {\it Pareto set} of $\mathcal{N}$ is the subset of $\mathcal{N}$ that is non-dominated with respect to $\mathcal{N}$; i.e., $n \in \mathcal{N}$ is in the Pareto set if $m \nprec n \; \forall \; m \in \mathcal{N}$. 
\end{defn}

\begin{defn}\label{def:boundary}
$n \in \mathcal{N}$ is a {\it Pareto set boundary} if $n \in$ Pareto set of $\mathcal{N}$ and $\exists j \in \{1,\dots,T\}$ for which $e_j(n) \leq e_j(m) \; \forall \; m \in \mathcal{N}$. 
\end{defn}

With these definitions in mind, we show that individuals selected by lexicase are Pareto set boundaries. 

\begin{lex}\label{thm:lex}
If individuals from a population $\mathcal{N}$ are selected by lexicase selection, those individuals are Pareto set boundaries of $\mathcal{N}$ with respect to $\mathcal{T}$. 
\end{lex}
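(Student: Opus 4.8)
The plan is to show two things separately: (i) any individual selected by lexicase selection lies in the Pareto set of $\mathcal{N}$, and (ii) any such individual is globally elite on at least one training case. Together these are exactly the two clauses of Definition~\ref{def:boundary}.

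First I would establish (ii), which is essentially immediate from the structure of Algorithm~\ref{alg:lex}. Consider a completed selection event that returns $n$. Let $t$ be the first case considered in that event. At that point the selection pool $\mathcal{S}$ is the full population $\mathcal{N}$, so the filtering step retains only those individuals achieving $\min_{m \in \mathcal{N}} e_t(m) = e^*_t$. Since $n$ survives every filtering step (that is what it means to be returned), in particular $n$ survives the first one, so $e_t(n) = e^*_t$, i.e. $e_t(n) \le e_t(m)$ for all $m \in \mathcal{N}$. Hence the case index $j = t$ witnesses the second condition of Definition~\ref{def:boundary}. (In the language of Eqn.~\ref{eq:prob}, the only cases that can appear first and still return $n$ are those in $\mathcal{K}_n$, the cases on which $n$ is globally elite; since $\mathcal{K}_n \ne \emptyset$ for any selectable $n$, this clause holds.)

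Next I would prove (i), that $n$ is non-dominated in $\mathcal{N}$, by contradiction. Suppose some $m \in \mathcal{N}$ has $m \prec n$, i.e. $e_j(m) \le e_j(n)$ for all $j \in \{1,\dots,T\}$ with strict inequality somewhere. The key observation is an invariant maintained through the main loop: at every iteration, if $n$ is still in $\mathcal{S}$, then $m$ is also still in $\mathcal{S}$. Indeed, a case $t$ filters out exactly those individuals whose error on $t$ strictly exceeds the pool minimum; since $e_t(m) \le e_t(n)$, whenever $n$ clears the threshold so does $m$. By induction on the loop iterations, $m$ remains in $\mathcal{S}$ as long as $n$ does. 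Now run the loop to its termination: it halts either because $|\mathcal{S}| = 1$ or because $\mathcal{T}' = \emptyset$. In the first case $\mathcal{S} = \{n\}$ would force $m = n$, contradicting that $m \prec n$ requires $m \ne n$. In the second case all $T$ cases have been used; but $m \prec n$ means there is a case $j^*$ with $e_{j^*}(m) < e_{j^*}(n)$, and when that case was considered (both $m$ and $n$ still being in $\mathcal{S}$ at that point by the invariant), the pool minimum on $j^*$ was $\le e_{j^*}(m) < e_{j^*}(n)$, so $n$ would have been filtered out — contradiction. Hence no such $m$ exists and $n$ is in the Pareto set.

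Combining (i) and (ii) gives that every lexicase-selected individual is a Pareto set boundary, as claimed. The only real subtlety — and the step I would write most carefully — is the invariant in (i): one must be careful that the comparison is against the pool minimum (which only shrinks as $\mathcal{S}$ shrinks) and that a weakly-dominating $m$ never gets eliminated before $n$; the argument is clean precisely because the pass condition $e_t(\cdot) = \texttt{elite}$ is monotone in the error. I would also remark that the identical argument goes through verbatim for $\epsilon$-lexicase with the pass condition $e_t(\cdot) \le \texttt{elite} + \epsilon_t$, since that condition is likewise monotone in $e_t$, so the invariant "$n \in \mathcal{S} \Rightarrow m \in \mathcal{S}$" is preserved; only the notion of "elite on case $j$" in clause (ii) is relaxed to "within $\epsilon_j$ of the best," matching the remark after Eqn.~\ref{eq:prob}.
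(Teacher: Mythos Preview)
Your proof is correct and follows essentially the same approach as the paper's: establish non-domination via the invariant that a dominator $m$ survives every filter that $n$ does (hence $n$ is eliminated or never isolated), and establish global elitism on at least one case directly from the first filtering step of the algorithm. Your version is more careful than the paper's in explicitly splitting on the two termination conditions of the loop and in spelling out why the strict-inequality case $j^*$ must eliminate $n$, but the underlying argument is the same.
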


\begin{proof}
Let $n_1 \in \mathcal{N}$ be any individual and let $n_2$ $\in \mathcal{N}$ be an individual selected from $\mathcal{N}$ by lexicase selection. Suppose $n_1 \prec n_2$. Then $e_j(n_1) \leq e_j(n_2) \;
\forall j  \in \{1,\dots,T\}$ and $\exists j \in \{1,\dots,T\}$ for which $e_j(n_1) < e_j(n_2)$. Therefore $n_1$ remains in the selection pool for every case that $n_2$ does, yet $\exists t \in \mathcal{T}$ for which $n_2$ is removed from every selection event due to $n_1$. Hence, $n_2$ cannot be selected by lexicase selection and the supposition is false. Therefore $n_2$ must be in the Pareto set of $\mathcal{N}$. 

Next, Algorithm~\ref{alg:lex} shows that $n_2$ must be elite on at least one test case; therefore $\exists j \in \{1,\dots,T\}$ for which $e_j(n_2) \leq e_j(m) \; \forall \; m \in \mathcal{N}$. Therefore, since $n_2$ is in the Pareto set of $\mathcal{N}$, according to Definition~\ref{def:boundary}, $n_2$ is a Pareto set boundary of $\mathcal{N}$.  
\end{proof}
\bigskip

\paragraph{Extension to $\epsilon$-lexicase selection}
We can extend our multi-objective analysis to $\epsilon$-lexicase selection for conditions in which $\epsilon$ is pre-defined for each fitness case  (Eqn.~\ref{eq:ep}), which is true for static and semi-dynamic $\epsilon$-lexicase selection. However when $\epsilon$ is recalculated for each selection pool, the theorem is not as easily extended due to the need to account for the dependency of $\epsilon$ on the current selection pool. We first define $\epsilon$ elitism in terms of a relaxed dominance relation and a relaxed Pareto set. We define the dominance relation with respect to $\epsilon$ as follows:

\begin{defn}\label{def:edom}
$n_1$ {\it $\epsilon$-dominates} $n_2$, i.e., ${n_1} \prec_{\epsilon} {n_2}$, if $e_j(n_1) + \epsilon_j \leq e_j(n_2)  \;
\forall j  \in \{1,\dots,T\}$ and $\exists j \in \{1,\dots,T\}$ for which $e_j(n_1) + \epsilon_j < e_j(n_2) $, where $\epsilon_j>0$ is defined according to Eqn.~\ref{eq:ep}.
\end{defn}

This definition of $\epsilon$-dominance differs from a previous $\epsilon$-dominance definition used by~\cite{laumanns_archiving_2002} (cf. Eqn. (6)) in which ${n_1} \prec_{\epsilon} {n_2}$ if \[ e_j(n_1) + \epsilon_j \leq e_j(n_2) \; \forall \; j  \in \{1,\dots,T\}\] 
Definition~\ref{def:edom} is more strict, requiring $e_j(n_1) + \epsilon_j < e_j(n_2)$ for at least one $j$ in analagous fashion to Definition~\ref{def:dom}.  In order to extend Theorem~\ref{thm:lex}, this definition must be more strict since a useful $\epsilon$-dominance relation needs to capture the ability of an individual to preclude the selection of another individual under $\epsilon$-lexicase selection. 
 
\begin{defn}\label{def:epset}
The {\it $\epsilon$-Pareto set} of $\mathcal{N}$ is the subset of $\mathcal{N}$ that is non-$\epsilon$-dominated with respect to $\mathcal{N}$; i.e., $n \in \mathcal{N}$ is in the $\epsilon$-Pareto set if $m \nprec_{\epsilon} n \; \forall \; m \in \mathcal{N}$. 
\end{defn}

\begin{defn}\label{def:eboundary}
$n \in \mathcal{N}$ is an {\it $\epsilon$-Pareto set boundary} if $n$ is in the $\epsilon$-Pareto set of $\mathcal{N}$ and $\exists j \in \{1,\dots,T\}$ for which $e_j(n_1) \leq e_j(m) + \epsilon_j \; \forall \; m \in \mathcal{N}$, where $\epsilon_j$ is defined according to Eqn~\ref{eq:ep}. \bigskip
\end{defn}

\begin{lex}\label{thm:eplex}
If $\epsilon$ is defined according to Eqn.~\ref{eq:ep}, and if individuals are selected from a population $\mathcal{N}$ by $\epsilon$-lexicase selection, then those individuals are $\epsilon$-Pareto set boundaries of $\mathcal{N}$.  
\end{lex}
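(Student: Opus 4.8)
The plan is to adapt the proof of Theorem~\ref{thm:lex} to the $\epsilon$-relaxed setting, verifying the two requirements of Definition~\ref{def:eboundary} in turn: that a selected individual lies in the $\epsilon$-Pareto set, and that it is globally $\epsilon$-elite on at least one case. The theorem is stated only for static and semi-dynamic $\epsilon$-lexicase, i.e.\ the variants in which each $\epsilon_j$ is fixed by Eqn.~\ref{eq:ep} before the selection event, so I would treat those two together, using that in both the pass test on case $j$ reads ``$e_j(n)\le(\text{elite})+\epsilon_j$'' with $\epsilon_j$ constant (for static, one may equivalently note that Algorithm~\ref{alg:ep-lex-s} is ordinary lexicase selection run on the binarized errors $f_j(n)\in\{0,1\}$, and try to inherit structure from Theorem~\ref{thm:lex} applied to $f$).

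For the $\epsilon$-Pareto-set half I would argue as in Theorem~\ref{thm:lex}, carrying the $\epsilon$-slack through. Let $n_2$ be selected and suppose, for contradiction, that some $n_1\prec_\epsilon n_2$. First, by induction over the cases drawn during a selection event, $n_1$ survives every filtering step that $n_2$ survives: whenever $n_1,n_2$ are both in the pool $\mathcal{S}$ as case $j$ is drawn, the pool elite on $j$ is at most $e_j(n_1)$, and $n_2$'s survival, $e_j(n_2)\le\text{elite}+\epsilon_j$, together with $e_j(n_1)+\epsilon_j\le e_j(n_2)$ from Definition~\ref{def:edom}, forces $e_j(n_1)\le\text{elite}$, so $n_1$ is elite and survives too. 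Hence the pool always contains both $n_1$ and $n_2$, the loop runs until $\mathcal{T}'$ is exhausted, and in particular it reaches the case $j_0$ with $e_{j_0}(n_1)+\epsilon_{j_0}<e_{j_0}(n_2)$; there $\text{elite}+\epsilon_{j_0}\le e_{j_0}(n_1)+\epsilon_{j_0}<e_{j_0}(n_2)$, so $n_2$ is removed, contradicting its selection. By Definition~\ref{def:epset}, $n_2$ is therefore in the $\epsilon$-Pareto set.

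For the second requirement I would use the first case drawn in the selection event: it is applied to the full population, whose elite on that case $j$ is the global best error, so the pool is reduced to $\{m\in\mathcal{N}:e_j(m)\le e^*_j+\epsilon_j\}$, and since $n_2$ survives it we get $e_j(n_2)\le e^*_j+\epsilon_j\le e_j(m)+\epsilon_j$ for all $m\in\mathcal{N}$, exactly the condition in Definition~\ref{def:eboundary}. With the previous paragraph this makes $n_2$ an $\epsilon$-Pareto set boundary.

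The step I expect to be the main obstacle is the ``shadowing'' induction for the \emph{static} variant, since there filtering on case $j$ uses the precomputed label $f_j(n)$ rather than $e_j(n)$, so the inequality chain must be rerun on $\{0,1\}$-valued data. One must check that $n_1\prec_\epsilon n_2$ still gives $f_j(n_1)\le f_j(n_2)$ for every $j$ (using $e^*_j=\min_m e_j(m)$) and, more delicately, that on the case realizing the strict gap $n_1$ alone suffices to push $n_2$ out of the pool; this is where the choice of $\epsilon_j$ as a strictly positive median absolute deviation (Eqn.~\ref{eq:ep}) must be used, because it makes $\epsilon$-dominance strictly stronger than ordinary dominance on every coordinate and excludes the degenerate case in which $n_1$ and $n_2$ have identical pass/fail profiles yet are still $\epsilon$-ordered. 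Pinning down this edge case --- equivalently, showing that $\prec_\epsilon$ pushes forward to ordinary dominance on the matrix $f$ --- is the technical heart; the semi-dynamic variant needs no such care, as its filter is already stated in terms of the errors.
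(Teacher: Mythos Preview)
Your treatment of the semi-dynamic variant is correct and is essentially the paper's proof written out in full: both arguments show that an $\epsilon$-dominator $n_1$ shadows $n_2$ through every filter, use the strictly separated case to eliminate $n_2$, and then read off the boundary condition from the first case drawn (which acts on the whole population). The paper is terser but structurally identical on this half.

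Your instinct that the static variant is the delicate point is right, but the resolution you sketch does not go through. The hope that $\epsilon_j>0$ forces $\prec_\epsilon$ to push forward to ordinary dominance on the binarized matrix $f$ is false. Take five individuals with case-1 errors $(0,2,4,6,8)$ and case-2 errors $(5,10,0,1,2)$; then $\epsilon_1=\epsilon_2=2$, the pass/fail vectors satisfy $f(n_1)=f(n_2)=(0,1)$, and yet $n_1\prec_\epsilon n_2$ (equality on case~1, strict on case~2). Under static $\epsilon$-lexicase with ordering $(t_1,t_2)$ the pool winnows to $\{n_1,n_2\}$ and then cannot distinguish them, so $n_2$ is selected with positive probability despite being $\epsilon$-dominated. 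The step you flag as ``the technical heart'' --- that the strict coordinate $j_0$ yields $f_{j_0}(n_1)<f_{j_0}(n_2)$ --- fails exactly when both $n_1$ and $n_2$ sit above the global threshold $e^*_{j_0}+\epsilon_{j_0}$ on that coordinate, which $\epsilon_{j_0}>0$ does nothing to prevent. The paper's own proof does not confront this point either; it asserts the shadowing-plus-elimination claim uniformly for both variants without the case analysis you (correctly) recognize is required.
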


\begin{proof}
Let $n_1 \in \mathcal{N}$ be any individual and let $n_2$ $\in \mathcal{N}$ be an individual selected from $\mathcal{N}$ by static or semi-dynamic $\epsilon$-lexicase selection. Suppose $n_1 \prec_{\epsilon} n_2$. Therefore $n_1$ remains in the selection pool for every case that $n_2$ does, yet $\exists t \in \mathcal{T}$ for which $n_2$ is removed from every selection event due to $n_1$. Hence, $n_2$ cannot be selected by lexicase selection and the supposition $n_1 \prec_{\epsilon} n_2$ is false.  Therefore $n_1$ and $n_2$ must be in the $\epsilon$-Pareto set of $\mathcal{N}$ to be selected. 

Next, by definition of Algorithm~\ref{alg:ep-lex-s} or 3.2, $n_2$ must be within $\epsilon_j$ of elite on at least one test case; i.e. $\exists j \in \{1,\dots,T\}$ for which $e_j(n_2) \leq e_j(m) + \epsilon_j \; \forall \; m \in \mathcal{N}$. Therefore, since $n_2$ is in the $\epsilon$-Pareto set of $\mathcal{N}$, according to Definition~\ref{def:eboundary}, $n_2$ must be a $\epsilon$-Pareto set boundary of $\mathcal{N}$.  
\end{proof}
\bigskip

To illustrate how lexicase selection only selects Pareto set boundaries, we plot an example selection from a population evaluated on two test cases in the left plot of Figure~\ref{fig:lex_pareto}. Each point in the plot represents an individual, and the squares are the Pareto set. Under a lexicase selection event with case sequence $\{t_1, t_2\}$, individuals would first be filtered to the two left-most individuals that are elite on $e_1$, and then to the individual among those two that is best on $e_2$, i.e. the selected square individual. Note that the selected individual is a Pareto set boundary. The individual on the other end of the Pareto set shown as a black square would be selected using the opposite order of cases.  

Consider the analogous case for semi-dynamic $\epsilon$-lexicase selection illustrated in the right plot of Figure~\ref{fig:lex_pareto}. In this case the squares are the $\epsilon$-Pareto set. Under a semi-dynamic $\epsilon$-lexicase selection event with case order $\{t_1, t_2\}$, the population would first be filtered to the four left-most individuals that are within $\epsilon_1$ of the elite error on case $t_1$, and then the indicated square would be selected by being the only individual within $\epsilon_2$ of the elite error on $t_2$ among the current pool. Note that although the selected individual is an $\epsilon$-Pareto set boundary by Definition~\ref{def:eboundary}, it is {\it not} a boundary of the Pareto set. Theorem~\ref{thm:eplex} only guarantees that the selected individual is within $\epsilon$ of the best error for at least one case, which in this scenario is $t_1$. Thus Figure~\ref{fig:lex_pareto} illustrates an important aspect of introducing $\epsilon$: it reduces the selectivity of each case, ultimately resulting in the selection of individuals that are not as extremely positioned in objective space. \edit{This parallels the behavior of $\epsilon$-dominance methods proposed by~\cite{laumanns_archiving_2002} that can lose extreme points. In light of this potential limitation, \cite{hernandez-diaz_pareto-adaptive_2007} proposed an adaptive $\epsilon$-dominance method to preserve such boundary points.}

\begin{figure}[tb]
\begin{minipage}{0.49\textwidth}
\centering
  \includegraphics[width = \textwidth]{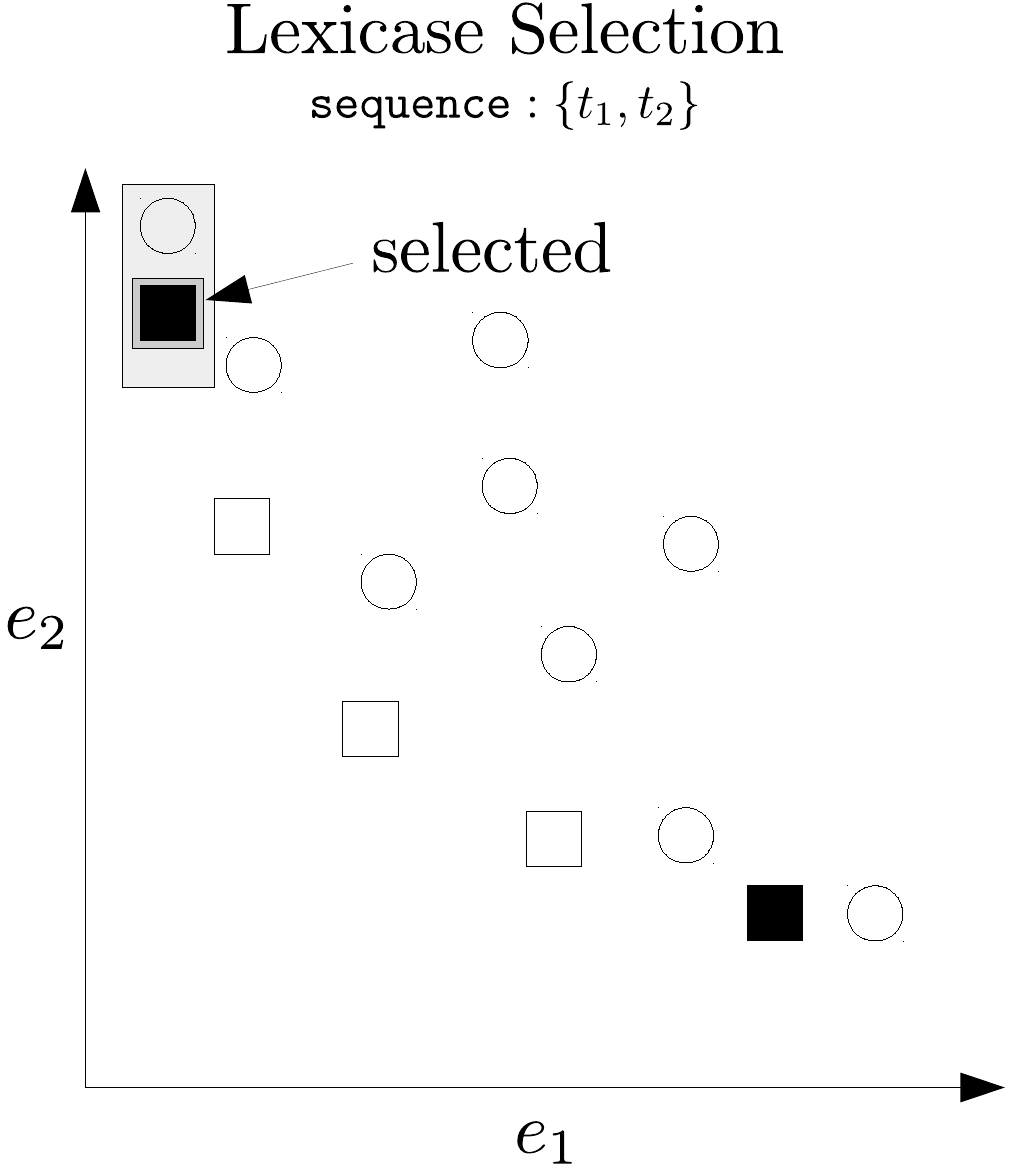}
\end{minipage}
\hspace{0.02\textwidth}
\begin{minipage}{0.49\textwidth}
\centering
  \includegraphics[width = \textwidth]{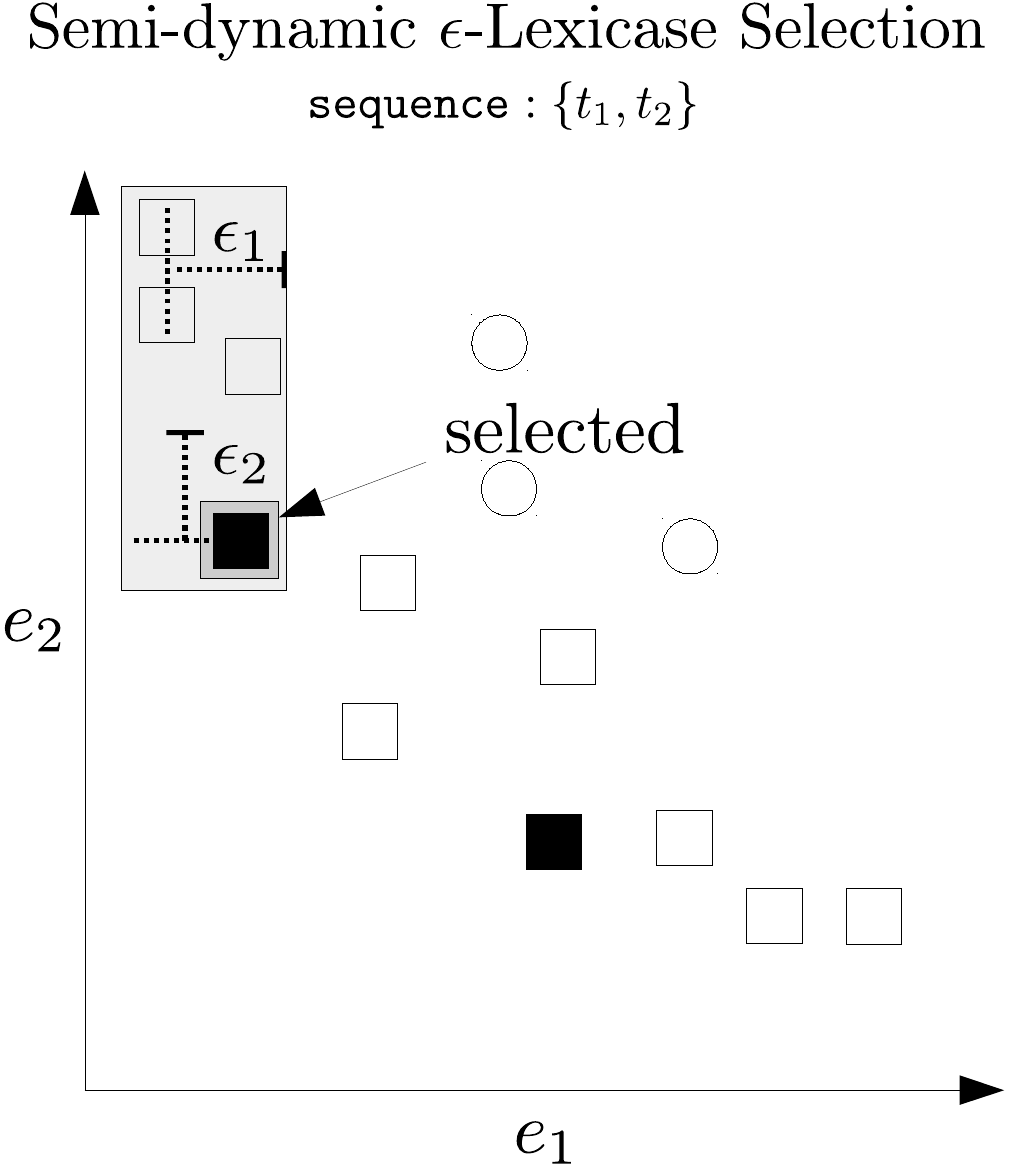}
\end{minipage}
\caption{An illustration of the performance of lexicase selection (left) and semi-dynamic $\epsilon$-lexicase selection (right) in a scenario involving two cases. Each point represents and individual in the population. The squares in the left figure are individuals in the Pareto set, and the squares on the right are individuals in the $\epsilon$-Pareto set. A selection event for case sequence $\{t_1,t_2\}$ is shown by the gray rectangles. The black points are individuals that could be selected by any case ordering.}\label{fig:lex_pareto}
\end{figure}
 
Regarding the position of solutions in this space, it's worth noting the significance of boundary solutions (and near boundary solutions) in the context of multi-objective optimization. \edit{The performance of algorithms in many-objective optimization is assessed by convergence, uniformity, and spread~\citep{li_spread_2009}, the last of which deals directly with the extent of boundary solutions. Indicator-based methods such as IBEA and SMS-EMOA use a measure of the hypervolume in objective space to evaluate algorithm performance~\citep{wagner_pareto-_2007}, where the hypervolume is a measure of how well the objective space is covered by the current set of solutions. Boundary solutions have been shown empirically to contribute significantly to hypervolume measures~\citep{deb_evaluating_2005}. The boundary solutions have an infinite score according to NSGA-II's crowding measure~\citep{schoenauer_fast_2000}, with higher being better, meaning they are the first non-dominated solutions to be preserved by selection when the population size is reduced. However, \cite{auger_theory_2009} showed mathematically that the position of solutions near the boundary is less important than the angle they form with other solutions when evaluating the hypervolume. \cite{auger_theory_2009} concluded that ``Extreme points are not generally preferred as claimed in~\citep{deb_evaluating_2005}, since the density of points does not depend on the position on the front but only on the gradient at the respective point".

Multi- and many-objective literature is therefore divided on how these boundary solutions drive search when the goal of the algorithm is to approximate the optimal Pareto front~\citep{wagner_pareto-_2007}.} The goal of GP, in contrast, is to preserve points in the search space that, when combined and varied, yield a single best solution. So while the descriptions above lend insight to the function of lexicase and $\epsilon$-lexicase selection, the different goals of search and the high dimensionality of training cases must be remembered when \edit{drawing parallels between these approaches.} 

As a last note, when considered as objectives, the worst-case complexity of lexicase selection matches that of NSGA-II: $O(TN^2)$. Interestingly, the worst case complexity of the crowding distance assignment portion of NSGA-II, $O(TN\log(N))$, occurs when all individuals are non-dominated, which is expected in high dimensions~\citep{farina_optimal_2002, wagner_pareto-_2007}. Under lexicase selection, a non-dominated population {\it that is semantically unique} will have a worst-case complexity of $O(N^2)$.

%
%
%

\edit{\section{Experimental Analysis}}
\edit{We begin our experimental analysis of lexicase selection by considering an illustrative example in \S\ref{s:ex}. We then test several parent selection strategies on a set of regression benchmarks in \S\ref{s:exp}. Finally, we quantify wall-clock runtimes for various selection methods as the population size is increased. \footnote{\edit{Code for these experiments: \url{http://github.com/lacava/epsilon_lexicase}}}}
\subsection{Illustrative Example}\label{s:ex}
Here we apply the concepts from \S\ref{s:prob} to consider the probabilities of selection under different methods on an example population. The goal of this section is to interweave the analyses of \S\ref{s:prob} and \S\ref{s:mo} to give an intuitive explanation of the differences between lexicase selection and the $\epsilon$-lexicase selection variants. 

An example population is presented in Table~\ref{tbl:ex2} featuring floating point errors, in contrast to Table~\ref{tbl:ex}. In this case, the population semantics are completely unique, although they result in the same mean error across the training cases, as shown in the ``Mean" column. As a result, tournament selection picks uniformly from among these individuals, resulting in equivalent probabilities of selection. As mentioned in \S\ref{s:prob}, with unique populations, lexicase selection is proportional to the number of cases for which an individual is elite. This leads lexicase selection to pick from among the four individuals that are elite on cases, i.e. $n_1$, $n_4$, $n_5$, and $n_9$, with respective probabilities 0.2, 0.2, 0.2, and 0.4, according to Eqn.~\ref{eq:prob}. Note these four individuals are Pareto set boundaries. 

\begin{table}[htb]
\centering
\scriptsize
\caption{Example population with training case performances and selection probabilities according to the different algorithms.}\label{tbl:ex2}
\rowcolors{1}{white}{LightCyan}
\begin{tabularx}{0.75\textwidth}{X|rrrrr|r|R{2em}R{2em}R{2em}R{2em}R{2em}}\toprule
$\mathcal{N}$ & \multicolumn{5}{c}{Cases} && \multicolumn{5}{c}{Probability of Selection} \\
& $e_1$ & $e_2$ & $e_3$ & $e_4$ & $e_5$ & Mean &	tourn	&	lex	&	$\epsilon$ lex static	&	$\epsilon$ lex semi	&	$\epsilon$ lex dyn\\ \midrule
$n_1$	&0.0	&	1.1	&	2.2	&	3.0	&	5.0 & 2.26	&	0.111	&	0.200	&	0.000	&	0.067	&	0.033\\ 
$n_2$	&0.1	&	1.2	&	2.0	&	2.0	&	6.0 & 2.26	&	0.111	&	0.000	&	0.150	&	0.117	&	0.200\\ 
$n_3$	&0.2	&	1.0	&	2.1	&	1.0	&	7.0 & 2.26	&	0.111	&	0.000	&	0.150	&	0.117	&	0.117\\ 
$n_4$	&1.0	&	2.1	&	0.2	&	0.0	&	8.0 & 2.26	&	0.111	&	0.200	&	0.300	&	0.200	&	0.167\\ 
$n_5$	&1.1	&	2.2	&	0.0	&	4.0	&	4.0 & 2.26	&	0.111	&	0.200	&	0.000	&	0.050	&	0.050\\ 
$n_6$	&1.2	&	2.0	&	0.1	&	5.0	&	3.0 & 2.26	&	0.111	&	0.000	&	0.000	&	0.050	&	0.033\\ 
$n_7$	&2.0	&	0.1	&	1.2	&	6.0	&	2.0 & 2.26	&	0.111	&	0.000	&	0.133	&	0.133	&	0.133\\ 
$n_8$	&2.1	&	0.2	&	1.0	&	7.0	&	1.0 & 2.26	&	0.111	&	0.000	&	0.133	&	0.133	&	0.217\\ 
$n_9$	&2.2	&	0.0	&	1.1	&	8.0	&	0.0 & 2.26	&	0.111	&	0.400	&	0.133	&	0.133	&	0.050\\  \midrule
$\epsilon$	&	0.9	& 0.9	&	0.9	&	2.0	& 2.0	&&&&&&\\ \bottomrule
\end{tabularx}
\end{table}

Due to its strict definition of elitism, lexicase selection does not account for the fact that other individuals are very close to being elite on these cases as well; for example $n_2$ and $n_3$ are close to the elite error on case $t_1$. The $\epsilon$-lexicase variants address this as noted by the smoother distribution of selection probabilities among this population. We focus first on static $\epsilon$-lexicase selection. Applying the $\epsilon$ threshold to the errors yields the following discrete fitnesses:
\begin{center}
\begin{tabular}{lrrrrr}
& $e_1$ & $e_2$ & $e_3$ & $e_4$ & $e_5$ \\
$n_1$	&0	&	1	&	1	&	1	&	1\\ 
$n_2$	&0	&	1	&	1	&	0	&	1\\ 
$n_3$	&0	&	1	&	1	&	0	&	1\\ 
$n_4$	&1	&	1	&	0	&	0	&	1\\ 
$n_5$	&1	&	1	&	0	&	1	&	1\\ 
$n_6$	&1	&	1	&	0	&	1	&	1\\ 
$n_7$	&1	&	0	&	1	&	1	&	0\\ 
$n_8$	&1	&	0	&	1	&	1	&	0\\ 
$n_9$	&1	&	0	&	1	&	1	&	0\\ 
\end{tabular}
\end{center}

The selection probabilities for static $\epsilon$-lexicase selection are equivalent to the selection probabilities of lexicase selection on this converted error matrix. Note that $n_1$ and $n_5$ have selection probabilities of zero because they are dominated in the {\it converted} error space. Despite elitism on case $t_1$, $n_1$ is not selected since $n_2$ and $n_3$ are $\epsilon$-elite on this case {\it in addition to} $t_4$. The same effect makes $n_5$ un-selectable due to $n_4$. Consider $n_4$, which has a higher probability of selection under static $\epsilon$-lexicase selection than lexicase selection. This is due to $n_4$ being $\epsilon$-elite on a unique combination of cases: $t_3$ and $t_4$. Lastly, $n_9$ is selected in equal proportions to $n_7$ and $n_8$ because all three are within $\epsilon$ of the elite error on the same cases. 

Semi-dynamic $\epsilon$-lexicase selection allows for all nine individuals to be selected with varying proportions that are similar to those derived for static $\epsilon$-lexicase selection. Selection probabilities for $n_1$ illustrate the differences in the static and semi-dynamic variants: $n_1$ has a chance for selection in the semi-dynamic case because when $t_1$ is selected as the first case, $n_1$ is within $\epsilon$ of the best case errors {\it among the pool}, i.e. $\{n_1$, $n_2$, $n_3\}$, for any subsequent order of cases. The probability of selection for $n_5$ and $n_6$ follow the same pattern.

Dynamic $\epsilon$-lexicase selection produces the most differentiated selection pressure for this example. Consider individual $n_8$ which is the most likely to be selected for this example. It is selected more often than $n_7$ or $n_9$ due to the adaptations to $\epsilon$ as the selection pool is winnowed. For example, $n_8$ is selected by case sequence $\{t_2,t_1,t_3\}$, for which the selection pool takes the following form after each case: $\{n_7, n_8, n_9\}, \{n_7, n_8\}, \{n_8\}$. Conversely, under semi-dynamic $\epsilon$-lexicase selection, $n_7$ and $n_9$ would not be removed by these cases because $\epsilon$ is fixed for that variant.

\subsection{Regression Experiments} \label{s:exp}
In this section we empirically test the variants of $\epsilon$-lexicase selection introduced in \S\ref{s:eplex}. The problems studied in this section are listed in Table~\ref{tbl:regression}. We benchmark nine methods using eight different datasets. Six of the problems are available from the UCI repository~\citep{lichman_uci_2013}. The UBall5D problem is a simulated equation\footnote{UBall5D is also known as Vladislavleva-4.} which has the form \[ y = \frac{10}{5+\sum_{i=1}^5{(x_i-3)^2}}\] The Tower problem and UBall5D were chosen from the benchmark suite suggested by \cite{white_better_2012}.

\begin{table}
\begin{minipage}{0.35\textwidth}

\centering
\scriptsize
\caption{Regression problems used for method comparisons.}\label{tbl:regression}
\begin{tabularx}{\textwidth}{X r r } \toprule
Problem & Dimension & Samples \\ \midrule
Airfoil & 5	& 1503 \\
Concrete	& 	8	& 1030	\\
ENC & 8 & 768 \\
ENH & 8 & 768 \\
Housing & 14 & 506 \\
Tower & 25 & 3135 \\
UBall5D & 5 & 6024 \\ 
Yacht	& 6	&	309	\\ \midrule
\end{tabularx}
\end{minipage}
\hspace{0.05\textwidth}
\begin{minipage}{0.6\textwidth}

\centering
\scriptsize
\caption{GP settings.}\label{tbl:symreg_settings}
\begin{tabularx}{\textwidth}{Xr} \toprule
Setting& Value \\ \midrule
Population size & 1000 \\
Crossover / mutation & 60/40\% \\
Program length limits & [3, 50] \\ 
ERC range & [-1,1] \\
Generation limit & 1000 \\
Trials & 50 \\
Terminal Set & \{$\mathbf{x}$, ERC, $+$, $-$, $*$, $/$, $\sin$, $\cos$, $\exp$, $\log$\}\\
Elitism & keep best \\
Fitness (non-lexicase methods) & MSE \\ \bottomrule
\end{tabularx}
\end{minipage}
\end{table}

We compare eight different selection methods: random selection, tournament selection, lexicase selection, age-fitness pareto optimization~\citep{schmidt_age-fitness_2011}, deterministic crowding~\citep{mahfoud_niching_1995}, and the three $\epsilon$-lexicase selection methods presented in \S\ref{s:eplex}. In addition to the selection methods that are benchmarked, we include a comparison to regularized linear regression using Lasso~\citep{tibshirani_regression_1996}. These methods are described briefly below, along with their abbreviations used in the results.
\begin{itemize}
\item Random Selection (rand): selection for parents is uniform random.
\item Tournament Selection (tourn): size two tournaments are conducted for choosing parents. 
\item Lexicase Selection (lex): see Algorithm~\ref{alg:lex}. 
\item Age-fitness Pareto optimization (afp): this method introduces a new individual each generation with an age of 0. Each generation, individuals are assigned an age equal to the number of generations since their oldest ancestor entered the population. Parents are selected randomly to create $N$ children. The children and parents then compete in survival tournaments of size two, in which an individual is culled from the population if it is dominated in terms of age and fitness by its competitor. 
\item Deterministic crowding (dc): A generational form of this niching method is used in which parents are selected randomly for variation and the child competes to replace the parent with which it is most similar. Similarity is determined based on the Levenshtein distance of the parent's equation forms, using a universal symbol for coefficients. A child replaces its parent in the population only if it has a better fitness.
\item Static $\epsilon$-lexicase selection (ep-lex-s): See Algorithm~\ref{alg:ep-lex-s}.
\item Semi-dynamic $\epsilon$-lexicase selection (ep-lex-sd): See Algorithm~\ref{alg:ep-lex-sd}.
\item Dynamic $\epsilon$-lexicase selection (ep-lex-d): See Algorithm~\ref{alg:ep-lex-d}.
\item Lasso (lasso): this method incorporates a regularization penalty into least squares regression using an $\ell_1$ measure of the model coefficients and uses a tuning parameter, $\lambda$, to specify the weight of this regularization. We use a least angle regression~\citep{efron_least_2004} implementation of Lasso that automatically chooses $\lambda$ using cross validation.
\end{itemize} 

The settings for the GP system\footnote{available from \url{https://epistasislab.github.io/ellyn/}} are shown in Table~\ref{tbl:symreg_settings}. We conduct 50 trials of each method by training on a random partition of 70\% of the dataset and comparing the prediction error of the best model from each method on the other 30\% of the dataset. In addition to test error, we compare the training convergence of the GP-based methods, the semantic diversity of the populations during the run, and the number of cases used for selection for the lexicase methods. We calculate population diversity as the fraction of unique semantics in the population. To compare the number of cases used in selection for the lexicase methods, we save the median number of cases used in selection events, i.e. the case depth, each generation.

\subsubsection{Regression Results}\label{s:results}
The boxplots in Figure~\ref{fig:boxplot_reg} show the test set MSE for each method on each problem. In the final subplot, we summarize the mean rankings of the methods on each trial of each problem to give a general comparison of performance. Ranks are calculated for each trial, and then averaged over all trials and problems to give an overall ranking comparison. In general we find that the $\epsilon$-lexicase methods produce models with the best generalization performance across the tested problems. Random selection and Lasso tend to perform the worst on these problems. It is interesting to note the performance of Lasso on the Tower problem, which is better than on other datasets; ep-lex-sd and ep-lex-d are the only GP variants to significantly outperform it. For every problem, a variant of $\epsilon$-lexicase selection performs the best, and the three variants of it tend to perform similarly. In accordance with previous results~\citep{la_cava_epsilon-lexicase_2016}, lexicase selection performs worse than tournament selection for these continuous valued problems. In contrast with previous findings~\citep{schmidt_age-fitness_2011}, dc tends to outperform afp, although both methods perform better than tournament selection. 

The $\epsilon$-lexicase methods show a marked advantage in converging on a low training error in fewer generations compared to all other methods, as evidenced in Figure~\ref{fig:train}. Note Figure~\ref{fig:train} reports the normalized MSE values on the training set for the best individual in the population each generation. Again we observe very little difference between the $\epsilon$-lexicase variants. 

We analyze the statistical significance of the test MSE results in Tables~\ref{tbl:wilcox} and~\ref{tbl:hsd}. Table~\ref{tbl:wilcox} shows pair-wise Wilcoxon ranksum tests for each method in comparison to ep-lex-sd. There are significant differences in performance for all problems between ep-lex-sd and all non-$\epsilon$-lexicase methods, with the exception of the comparison to dc on the housing and tower datasets. Analysis of variance of the method rankings across all problems indicates significant differences ($p<$ 2e-16). A post-hoc statistical analysis shown in Table~\ref{tbl:hsd} indicates that this difference is due to significant differences in rankings across all problems for ep-lex-sd and ep-lex-d in pairwise comparison to all other non-$\epsilon$-lexicase methods. The three variants of $\epsilon$-lexicase do not differ significantly from each other according to this test.

Figure~\ref{fig:diversity} shows the semantic diversity of the populations for each generation using different selection methods. $\epsilon$-lexicase variants, dc, and lexicase selection all produce the highest population diversity, as expected due to their diversity maintenance design. Interestingly, they all produce more diverse semantics than random selection, suggesting that the preservation of {\it useful} diversity is an important feature of the observed performance improvements. Surprisingly, afp is found to produce low semantic diversity, despite its incorporation of age and random restarts each generation. Given that afp has no explicit semantic diversity mechanism, it's possible that age is not an adequate surrogate for behavioral diversity on these problems. 
  
One of the motivations for introducing an $\epsilon$ threshold into lexicase selection is to allow selection to make use of more cases in continuous domains when selecting parents. Figure~\ref{fig:case_depth} demonstrates that $\epsilon$-lexicase methods achieve this goal. As we noted at the beginning of \S\ref{s:eplex}, lexicase selection likely only uses one case per selection event in continuous domains, leading to poor performance. We observe this phenonemon in the median case depth measurements. Among the $\epsilon$-lexicase variants, ep-lex-sd uses the most cases in general, followed by ep-lex-s and ep-lex-d. Intuitively this result makes sense: $\epsilon$ is likely to be largest when computed across the population, and because ep-lex-sd uses the global $\epsilon$ (Eqn.~\ref{eq:ep}) and a local error threshold, it is likely to keep the most individuals at each case filtering. These results also suggest that $\epsilon$ shrinks substantially when calculated among the pool after each case (Eqn.~\ref{eq:epd}) in ep-lex-d.

\begin{figure}
\centering
  \includegraphics[width=\textwidth]{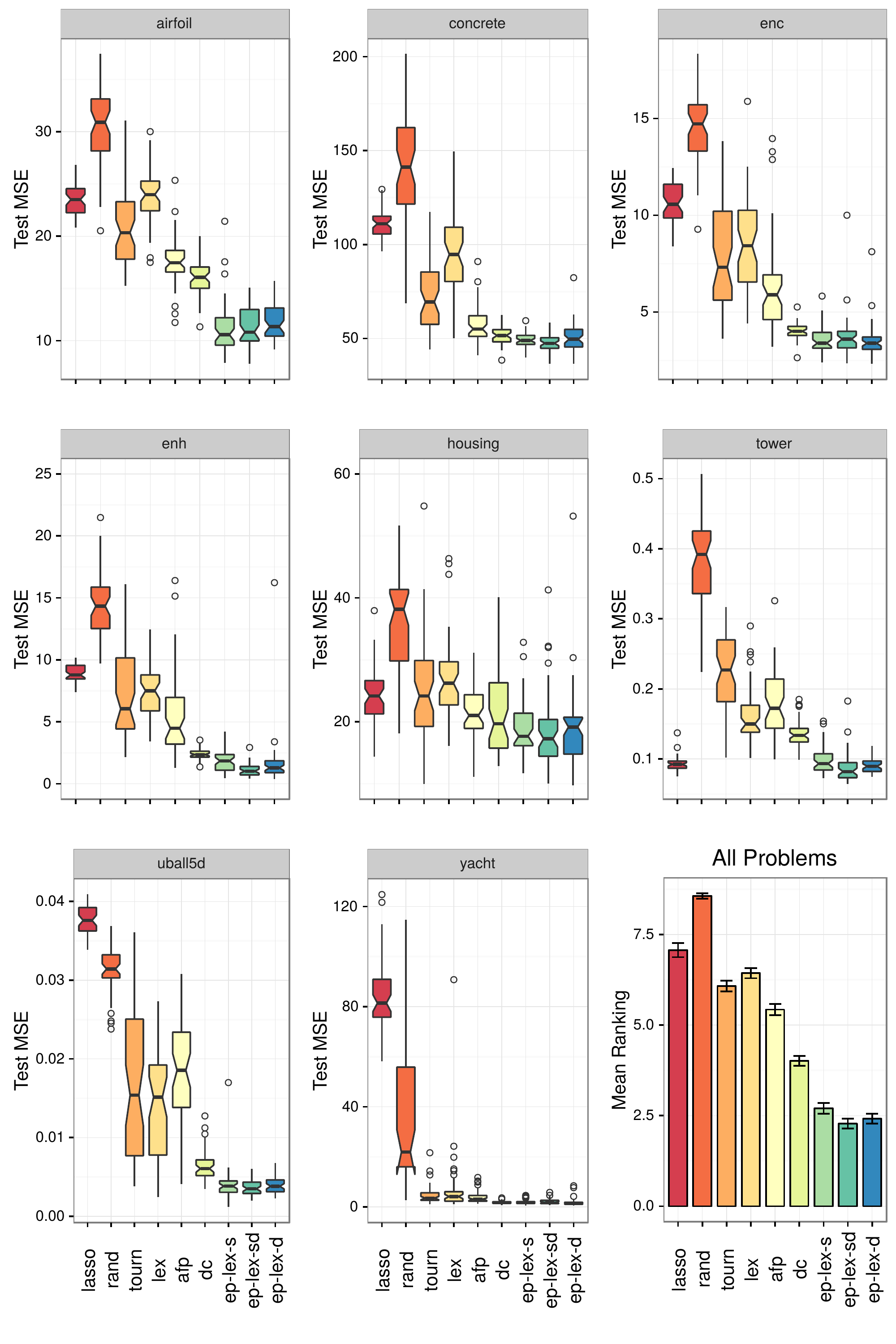}\\
  \caption{Boxplots of the mean squared error on the test set for 50 randomized trials of each algorithm on the regression benchmark datasets.}\label{fig:boxplot_reg}
\end{figure}

\begin{figure}
\centering
  \includegraphics[width=\textwidth]{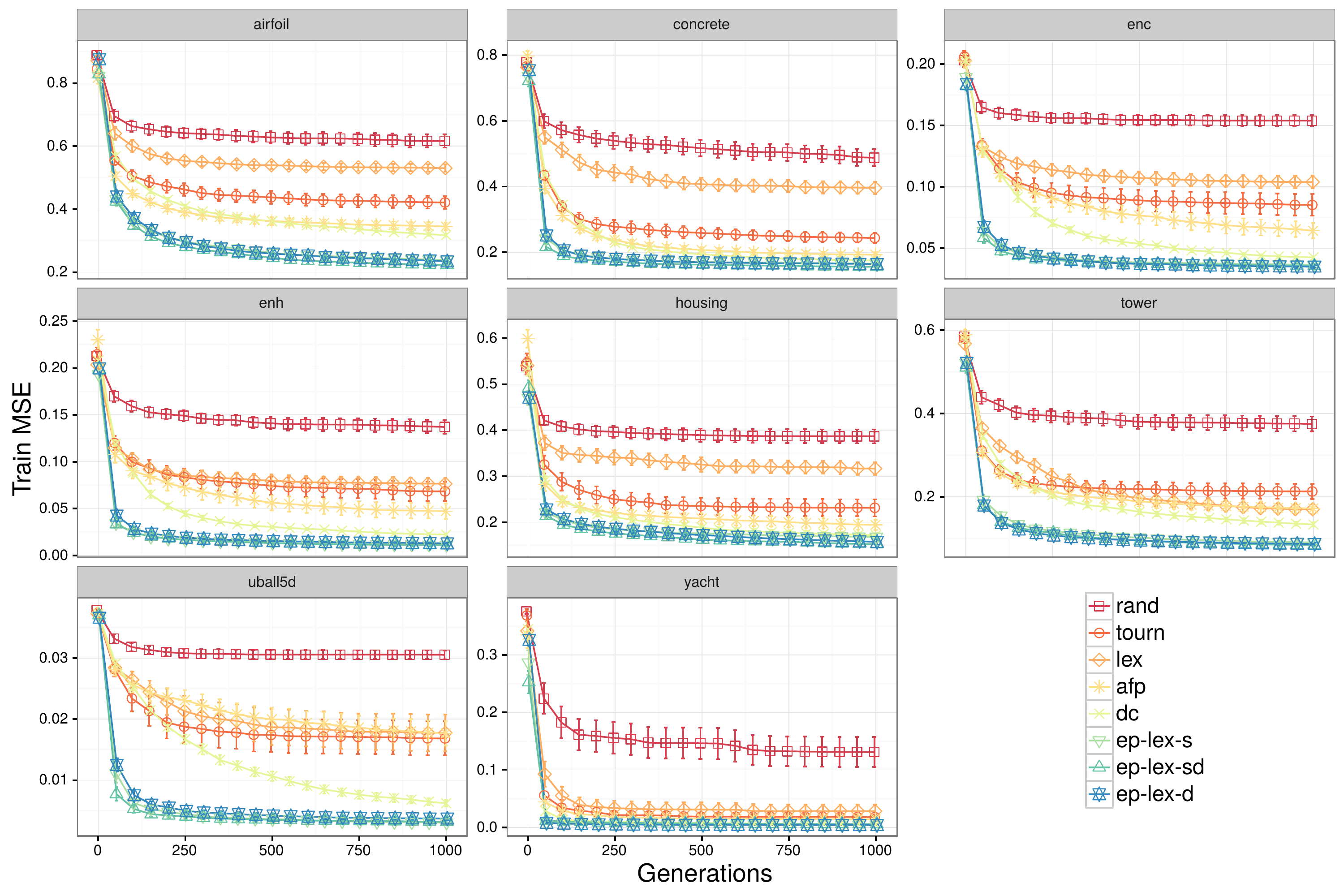}\\
 \caption{Training error for the best individual using different selection methods. The results are averaged over 50 trials with 95\% confidence intervals.}
\label{fig:train}
\end{figure}

\begin{figure}
\centering
  \includegraphics[width=\textwidth]{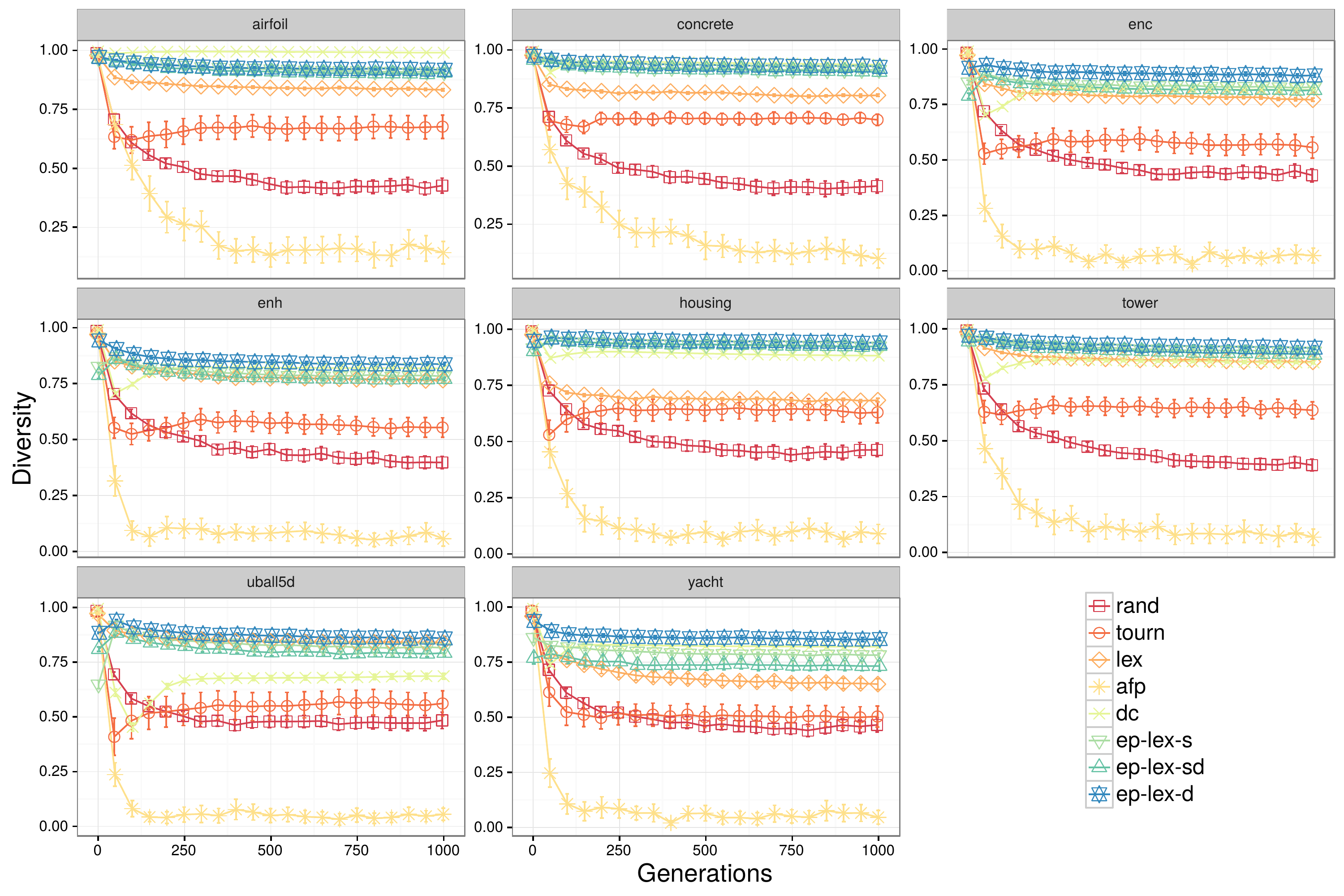}\\
 \caption{Behavioral diversity of the population using different selection methods. The results are averaged over 50 trials with 95\% confidence intervals.}
\label{fig:diversity}
\end{figure}

\begin{figure}[htb]
\centering
  \includegraphics[width=\textwidth]{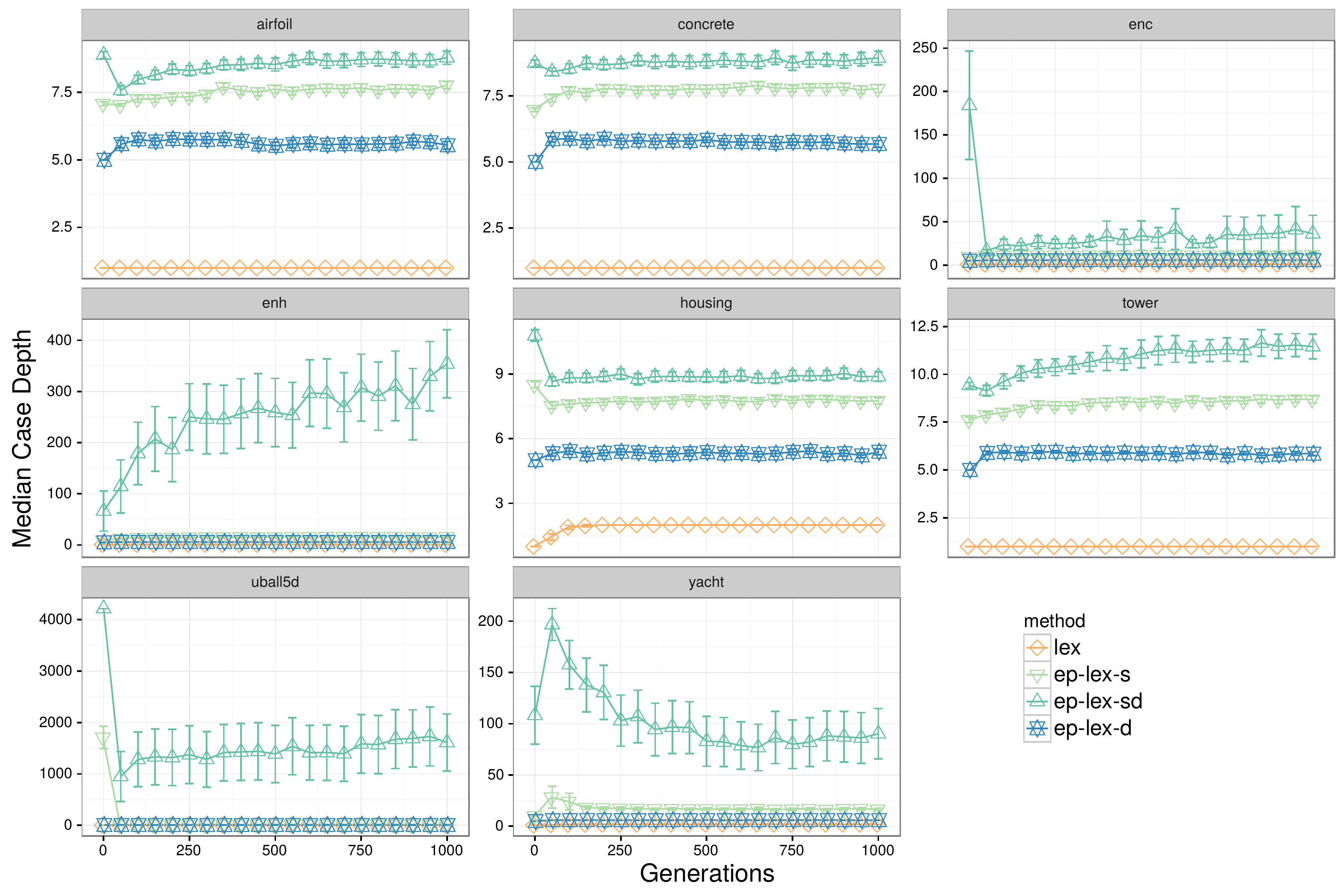}\\
 \caption{Median case depths of selection each generation for the lexicase selection variants on the regression problems. The results are averaged over 50 trials with 95\% confidence intervals. } 
\label{fig:case_depth}
\end{figure}

\label{tbl:tables===}

\begin{table}[ht]
\centering
\caption{Significance test $p$-values comparing test MSE using the pair-wise Wilcoxon rank-sum test with Holm correction for multiple comparisons. All significance tests are conducted relative to semi-dynamic $\epsilon$-lexicase (ep-lex-sd). Bold indicates $p<$ 0.05.} 
\label{tbl:wilcox}
\begingroup\footnotesize
\begin{tabular}{rllllllll}
  \toprule
 & lasso & rand & tourn & lex & afp & dc & ep-lex-s & ep-lex-d \\ 
  \midrule
airfoil & {\bf 2.54e-16} & {\bf 2.54e-16} & {\bf 2.54e-16} & {\bf 2.54e-16} & {\bf 2.55e-15} & {\bf 1.59e-14} & 0.57 & 0.57 \\ 
  concrete & {\bf 2.54e-16} & {\bf 2.54e-16} & {\bf 6.24e-13} & {\bf 4.25e-16} & {\bf 2.74e-08} & {\bf 1.66e-04} & 0.1 & 0.057 \\ 
  enc & {\bf 5.15e-16} & {\bf 2.54e-16} & {\bf 4.12e-14} & {\bf 2.57e-15} & {\bf 1.67e-12} & {\bf 1.61e-03} &   1 & 0.49 \\ 
  enh & {\bf 2.54e-16} & {\bf 2.54e-16} & {\bf 2.67e-16} & {\bf 2.54e-16} & {\bf 1.41e-15} & {\bf 2.00e-14} & {\bf 1.21e-04} & {\bf 1.28e-02} \\ 
  housing & {\bf 1.51e-05} & {\bf 6.20e-13} & {\bf 8.12e-04} & {\bf 3.40e-07} & {\bf 1.57e-02} & 0.22 &   1 &   1 \\ 
  tower & {\bf 6.38e-03} & {\bf 2.54e-16} & {\bf 1.57e-15} & {\bf 6.39e-15} & {\bf 7.63e-15} & {\bf 3.67e-14} & {\bf 6.38e-03} & 0.066 \\ 
  uball5d & {\bf 2.54e-16} & {\bf 2.54e-16} & {\bf 4.80e-15} & {\bf 1.04e-13} & {\bf 6.96e-16} & {\bf 1.55e-11} &   1 &   1 \\ 
  yacht & {\bf 2.54e-16} & {\bf 5.46e-16} & {\bf 1.52e-07} & {\bf 7.86e-07} & {\bf 4.93e-06} &   1 &   1 & 0.053 \\ 
   \bottomrule
\end{tabular}
\endgroup
\end{table}

\begin{table}[ht]
\centering
\caption{Post-hoc pairwise statistical tests of rankings across problems according to Tukey's Honest Significant Difference test. Bold values indicate $p<$ 0.05 with adjustment for multiple comparisons.} 
\label{tbl:hsd}
\begingroup\footnotesize
\begin{tabular}{rllllllll}
  \toprule
 & lasso & rand & tourn & lex & afp & dc & ep-lex-s & ep-lex-sd \\ 
  \midrule
ep-lex-s & {\bf 1.55e-11} & {\bf 1.53e-11} & {\bf 1.36e-09} & {\bf 6.19e-11} & {\bf 6.32e-07} & 0.066 &  &  \\ 
  ep-lex-sd & {\bf 1.54e-11} & {\bf 1.53e-11} & {\bf 4.00e-11} & {\bf 1.63e-11} & {\bf 1.17e-08} & {\bf 3.59e-03} & 0.98 &  \\ 
  ep-lex-d & {\bf 1.54e-11} & {\bf 1.53e-11} & {\bf 1.05e-10} & {\bf 1.86e-11} & {\bf 4.32e-08} & {\bf 1.00e-02} &   1 &   1 \\ 
   \bottomrule
\end{tabular}
\endgroup
\end{table}

\subsection{Scaling Experiment}
In order to get an empirical sense of the time scaling of $\epsilon$-lexicase selection in comparison to other selection methods, we run a set of experiments in which the population size is varied between 50 and 2000 while using a fixed training set of 100 samples from the UBall5D problem. We run 10 trials of each population size setting and compare the eight GP methods listed above. We use the results to estimate the time complexity of the $\epsilon$-lexicase selection variants as a function of population size. 

\subsubsection{Scaling Results}

The results of the time complexity experiment are shown in Figure~\ref{fig:time_pop} as a log-log plot with wall-clock times on the y-axis and the population size on the x-axis. We estimate the time scaling as a function of population size by fitting a linear model to the log-transformed results, as $\log(\text{Runtime}(N)) = a + b \log(N)$, which gives $\text{Runtime}(N) = a N^b$. The linear models are shown in Figure~\ref{fig:time_pop} for the $\epsilon$-lexicase selection methods, which estimate the exponent of the complexity model, $b$, to be between 0.935 and 0.944. Therefore on average over these settings, the runtime of $\epsilon$-lexicase selection as a function of $N$ is approximately $\text{Runtime}(N) = 0.45 N^{0.939}$. This suggests a much lower time complexity with respect to $N$ in practice than the worst-case complexity of $N^2$ (see \S\ref{s:lex}). In general, the lexicase methods fall between deterministic crowding and tournament selection in terms of wall clock times, with afp achieving the lowest times at higher population sizes. All runtime differences between methods are well within an order of magnitude. 

\begin{figure}[htb]
\centering
  \includegraphics[width=0.6\textwidth]{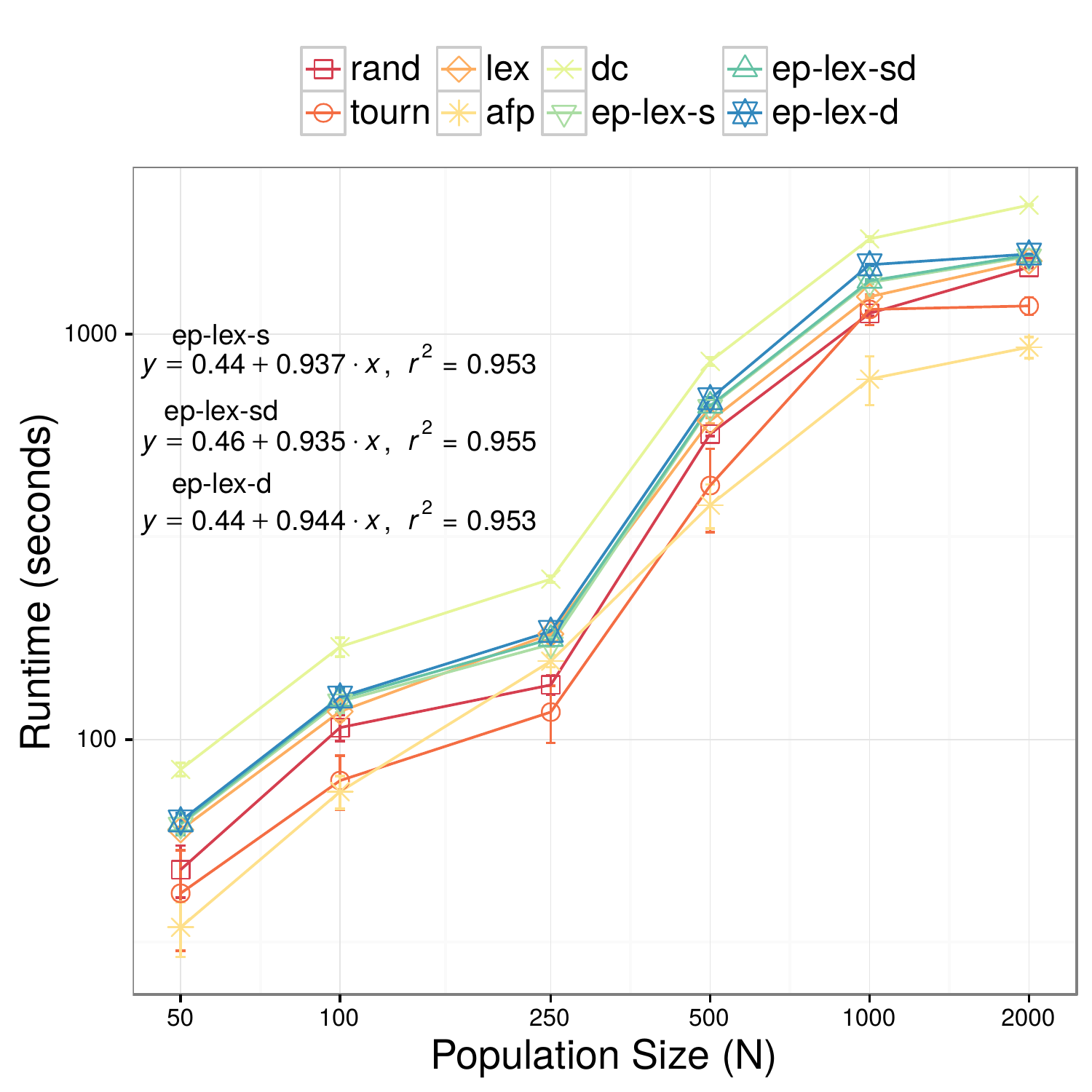}\\
 \caption{Scaling of wall-clock runtimes as a function of population size.}\label{fig:time_pop}


\end{figure}

\section{Discussion}\label{s:discuss}
The experimental results show that $\epsilon$-lexicase selection performs well on the symbolic regression problems compared to other \edit{GP} methods \edit{and Lasso}. $\epsilon$-lexicase leads to quicker learning on the training set (Figure~\ref{fig:train}) and better test set performance (Figure~\ref{fig:boxplot_reg}) \edit{than other GP methods}. The improvement in performance compared to traditional selection methods appears to be tied to the high semantic diversity that $\epsilon$-lexicase selection maintains throughout training (Figure~\ref{fig:diversity}), and its preservation of individuals that perform well on unique portions of the training cases. $\epsilon$-lexicase selection shows a categorical improvement over lexicase selection for these continuous valued problems. Although lexicase selection also maintains diverse semantics, its inferior performance can be explained by its under-utilization of training cases for selection (Figure~\ref{fig:case_depth}) and its property of selecting only among strictly elite individuals (see the example from \S\ref{s:ex}), a property that is relaxed through the introduction of $\epsilon$ thresholds in $\epsilon$-lexicase selection.   

Two new variants of $\epsilon$-lexicase selection, semi-dynamic and dynamic, perform the best overall in our experiments. However, the variants of $\epsilon$-lexicase do not differ significantly across all tested problems, which suggests that the foundations of the method are robust to different definitions of $\epsilon$ as long as they result in higher leverage of case information during selection compared to normal lexicase selection, which underperforms on regression problems. In view of the results, we suggest semi-dynamic $\epsilon$-lexicase (ep-lex-sd, Algorithm~\ref{alg:ep-lex-sd}) as the default implementation of $\epsilon$-lexicase selection since it has the lowest mean test ranking and appears to utilize the most case information according to Figure~\ref{fig:case_depth}. 

$\epsilon$-lexicase selection is a global pool, uniform random sequence, non-elitist version of lexicase selection~\citep{spector_assessment_2012}. Compared to traditional lexicase selection, which is elitist, $\epsilon$-lexicase selection represents a relaxed version of lexicase selection; other potential relaxations could show similar benefits. ``Global pool" means that each selection event begins with the entire population; however it is possible that smaller pools, perhaps defined geographically~\citep{spector_trivial_2006}, could improve performance on certain problems that respond well to relaxed selection pressure. Future work could also consider alternative orderings of test cases that may perform better than ``uniform random sequence" ordering that has been the focus of work thus far. \cite{liskowski_comparison_2015} attempted to use derived objective clusters as cases in lexicase selection, but found that this actually decreased performance, possibly due to the small number of resultant objectives. \cite{burks_investigation_2016} found biasing case orderings in terms of performance yielded mixed results. Nevertheless, there may be a form of ordering or case reduction that improves lexicase selection's performance over random shuffling.   
 
The ordering of the training cases that produce a given parent also contains potentially useful information about the parent that could be used by the search operators in GP. \cite{helmuth_general_2015-1} observed that lexicase selection creates large numbers of distinct behavioral clusters in the population (an observation supported by Figure~\ref{fig:diversity}). In that regard, it may be advantageous, for instance, to perform crossover on individuals selected by differing orders of cases such that their offspring are more likely to inherit subprograms with unique partial solutions to a given task. Recent work has highlighted the usefulness of semantically diverse parents when conducting geometric semantic crossover in geometic semantic GP~\citep{chen_geometric_2017}.   

Based on the observations in \S\ref{s:prob}, when the training set is much larger than the population size, some cases are likely to go unused. In these scenarios it may be advantageous to reduce program evaluations by lazily evaluating programs on cases as they appear in selection events. Indeed, Eqn.~\ref{eq:prob_case} could be used as a guide for determining when a lazy evaluation strategy would lead to significant computational savings.

\edit{Limitations of the current experimental analysis should be noted. First, we have not considered hyperparameter tuning of the GP system, which we intend to pursue in future work. In addition, the non-GP regression comparisons are limited to Lasso. In future work, we intend to compare to a broader class of learning algorithms. Finally, we have considered lexicase and $\epsilon$-lexicase selection only in the context of GP applied to symbolic regression. Future work should consider the application of these selection methods to other areas of EC, and the use of these algorithms for other learning tasks.}

\section{Conclusions}\label{s:conclusion}
In this paper we present a probabilistic and multi-objective analysis of lexicase selection and $\epsilon$-lexicase selection. We develop the expected probabilities of selection under lexicase selection variants, and show the impact of population size and training set size on probabilities of selection. For the first time, the connection between lexicase selection and multi-objective optimization is analyzed, showing that individuals selected by lexicase selection occupy the boundaries or near boundaries of the Pareto front in the high-dimensional space spanned by the population errors. 

In addition, we experimentally validate $\epsilon$-lexicase selection, including the new semi-dynamic and dynamic variants, on a set of real-world and synthetic symbolic regression problems. The results suggest that $\epsilon$-lexicase selection strongly improves the ability of GP to find accurate models. Further analysis of these runs show that lexicase variants maintain exceptionally high diversity during evolution, and that $\epsilon$-lexicase variants consider more cases per selection event than standard lexicase selection. The results validate our motivation for creating this variant of lexicase for continuous domains, and suggest the adoption of lexicase selection and variants of it in similar domains.  

\section{Acknowledgments}
This work was supported by the Warren Center for Network and Data Science at UPenn, NIH grants P30-ES013508, AI116794 and LM009012, and NSF grants 1617087, 1129139 and 1331283. Any opinions, findings, and conclusions or recommendations expressed in this publication are those of the authors and do not necessarily reflect the views of the National Science Foundation.

\bibliographystyle{apalike}
\bibliography{epsilon_lexicase}

\end{document}